\documentclass{article}

\usepackage{microtype}
\usepackage{graphicx}
\usepackage{subfigure}
\usepackage{booktabs} %

\usepackage{hyperref}

\usepackage[accepted]{icml2024}

\usepackage{amsmath}
\usepackage{amssymb}
\usepackage{mathtools}
\usepackage{amsthm}
\usepackage{dsfont}
\usepackage{float}

\usepackage[capitalize,noabbrev]{cleveref}

\theoremstyle{plain}
\newtheorem{theorem}{Theorem}[section]
\newtheorem{proposition}[theorem]{Proposition}

\theoremstyle{definition}

\theoremstyle{remark}

\usepackage[textsize=tiny]{todonotes}

\usepackage{amsmath,amsfonts,bm}

\def\1{\bm{1}}

\DeclareMathAlphabet{\mathsfit}{\encodingdefault}{\sfdefault}{m}{sl}
\SetMathAlphabet{\mathsfit}{bold}{\encodingdefault}{\sfdefault}{bx}{n}

\newcommand{\E}{\mathbb{E}}

\DeclareMathOperator*{\argmax}{arg\,max}

\newcommand{\ELBO}{\mathrm{ELBO}}

\icmltitlerunning{Robust Inverse Graphics via Probabilistic Inference}

\begin{document}

\twocolumn[
\icmltitle{Robust Inverse Graphics via Probabilistic Inference}

\icmlsetsymbol{equal}{*}

\begin{icmlauthorlist}
\icmlauthor{Tuan Anh Le}{equal,goog}
\icmlauthor{Pavel Sountsov}{equal,goog}
\icmlauthor{Matthew D. Hoffman}{goog}
\icmlauthor{Ben Lee}{goog}
\icmlauthor{Brian Patton}{goog}
\icmlauthor{Rif A. Saurous}{goog}
\end{icmlauthorlist}

\icmlaffiliation{goog}{Google}

\icmlcorrespondingauthor{Tuan Anh Le}{tuananhl@google.com}
\icmlcorrespondingauthor{Pavel Sountsov}{siege@google.com}

\icmlkeywords{inverse graphics; probabilistic inference; diffusion models; NeRF; sequential Monte Carlo}

\vskip 0.3in
]

\printAffiliationsAndNotice{\icmlEqualContribution} %

\begin{abstract}
How do we infer a 3D scene from a single image in the presence of corruptions like rain, snow or fog?
Straightforward domain randomization relies on knowing the family of corruptions ahead of time.
Here, we propose a Bayesian approach---dubbed robust inverse graphics (RIG)---that relies on a strong scene prior and an uninformative uniform corruption prior, making it applicable to a wide range of corruptions.
Given a single image, RIG performs posterior inference jointly over the scene and the corruption.
We demonstrate this idea by training a neural radiance field (NeRF) scene prior and using a secondary NeRF to represent the corruptions over which we place an uninformative prior.
RIG, trained only on clean data, outperforms depth estimators and alternative NeRF approaches that perform point estimation instead of full inference.
The results hold for a number of scene prior architectures based on normalizing flows and diffusion models.
For the latter, we develop \emph{reconstruction-guidance with auxiliary latents} (ReGAL)---a diffusion conditioning algorithm that is applicable in the presence of auxiliary latent variables such as the corruption.
RIG demonstrates how scene priors can be used beyond generation tasks.
\end{abstract}

\section{Introduction}
We explore the inference of a 3D scene from a single image that is robust to corruptions to the underlying 3D scene and its measurement, such as the presence of rain, snow, fog or other floaters, or imperfect knowledge of camera parameters.
Being robust to such corruptions would extend the range of normal operation of such a system.
For example, a self-driving car would be able to drive in more diverse weather conditions.

Current approaches either use domain randomization \citep{zhao2022unsupervised, gasperini2023robust, zhu2023ecdepth, saunders2023selfsupervised} or regularize training using additional loss terms \citep{wynn2023diffusionerf, warburg2023nerfbusters}.
Data randomization involves selecting a family of corruptions which are included in the data generation process.
However, we don't always know the kind of corruptions we want to be robust to.
Floaters---inaccurate high density regions in the reconstructed scenes---are difficult to predict ahead of time, so are commonly handled by including additional loss terms in the reconstruction loss.
It is unclear how to extend this approach to other possibly more extreme situations.

We propose \emph{robust inverse graphics} (RIG), where we view the problem through the lens of probabilistic inference.
We rely on a pre-trained scene prior, in our case a prior over neural radiance fields (NeRFs)\footnote{We require a 3D representation which supports differentiable rendering and is convenient to place a distribution over. NeRFs were the best at fulfilling these requirements at the time of writing, but other approaches could also work, such as Gaussian splatting.}, and a weak prior over corruptions---in the case of 3D scene corruptions, another NeRF with a uniform prior over its parameters.
RIG performs full probabilistic inference over both the scene and corruption NeRF parameters instead of searching for the most probable solution, such as a single set of NeRF parameters obtained via maximum a posteriori (MAP) inference.

\begin{figure}[!tb]
  \centering
    \includegraphics[scale=0.75]{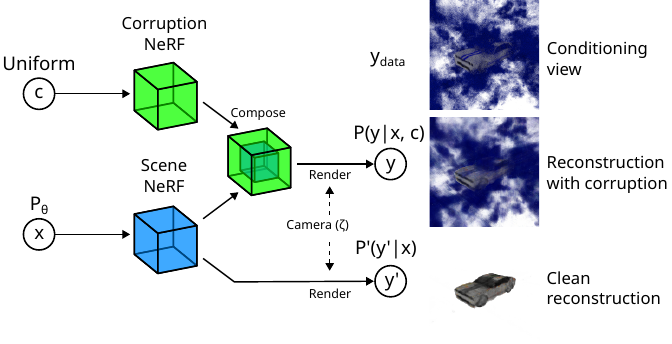}
    \vspace{-2em}
\caption{
    Robust Inverse Graphics (RIG).
    By modeling the generative process of 2D renderings $y$ of 3D scenes, we can reconstruct clean scenes by performing joint probabilistic inference on scene latents ($x$) and corruption parameters ($c$).
}
    \vspace{-1em}
    \label{fig:diagram}
\end{figure}

Fine-tuning by minimizing reconstruction error is a standard technique in conditional 3D generation  \citep[e.g.][]{chen2023singlestage}.
In the presence of priors, fine-tuning corresponds to obtaining a MAP estimate of the NeRF parameters of the scene and the NeRF parameters of the observed corruption.
While MAP may yield good point estimates in a multi-view setting, in a single-image setting it yields a ``billboard'' solution where the corruption NeRF ends up explaining the scene from the conditioned-on view at the expense of the scene NeRF---the corruption NeRF can become a billboard in front of the camera.
Below, we prove that this problem is intrinsic to MAP inference, and empirically show that doing full probabilistic inference doesn't suffer from this problem.

We make the following core contributions:

\begin{enumerate}
    \item
    We propose RIG, a general framework for robust inverse graphics via performing full probabilistic inference over scene and corruption latents.
    
    \item
    We validate the RIG approach on 3D datasets with a number of prior and NeRF representations, across a number of possible corruptions.
    We empirically show that full probabilistic inference produces better results than point estimates on the monocular depth
    
    \item
    To enable RIG on diffusion-based priors, we develop reconstruction-guidance with auxiliary latents (ReGAL) and its importance sampling generalization---a class of general-purpose diffusion conditioning methods that is applicable to latent variable models where a subset of the latents is modeled using a diffusion prior. To the best of our knowledge, we are the first to consider diffusion conditioning in this setting.
\end{enumerate}

\section{Method}

Given a single image $y$, we would like to infer the underlying 3D scene representation $x$ over which we have a prior $p(x)$.
We assume the scene contains a corruption $c$ with a corresponding prior $p(c)$ and that we have a rendering function $R(x, c, \zeta)$ that produces an image given the scene, the corruption and camera parameters $\zeta$.
We form the likelihood $p(y | x, c)$ (we omit $\zeta$ for notational clarity) by treating the rendered image as the mean of per-pixel-and-channel independent Gaussians with a constant-variance observation noise.
Our approach, dubbed robust inverse graphics (RIG), performs full posterior inference $p(x, c | y)$ to obtain the scene. \Cref{fig:diagram} illustrates the approach.

\textbf{Scene representation}
We focus on neural radiance field (NeRF) representations \citep{mildenhall2021nerf} because of their amenability to gradient-based inference.
Let $f(x, x_r, d_r)$ be a trainable function\footnote{For a fixed $x$, $f(x, \cdot, \cdot)$ is a radiance field.} mapping from the scene latent $x$, ray position $x_r \in \mathbb R^3$ and ray direction $d_r \in \mathbb S^2$ to color $\gamma \in [0, 1]^3$ and density $\sigma \in \mathbb R^+$.
Given camera extrinsics and intrinsics $\zeta$, we render the color of a pixel by casting a ray from the camera origin through that pixel's center and evaluating the volumetric rendering integral along that ray.
In practice, this integral is estimated using stochastic quadrature by querying the integrated radiance field along the ray, using hierarchical sampling described in \citet{barron2021mipnerf,barron2023zipnerf}.
Rendering the full image, denoted $y = R(x)$ (and omitting the camera parameters $\zeta$), is done by volumetrically rendering each pixel in the image. See \Cref{app:nerf_rendering} for details.

\textbf{Scene prior}
We assume we have a pre-trained prior $p(x)$ over NeRFs from which we can sample the scene latent $x$ and render images $y$ from different viewpoints $\zeta$.
In our experiments, we use the ProbNeRF model \citep{hoffman2023probnerf} which places a RealNVP \citep{dinh2016density} prior over $x$ and trains a hypernetwork mapping from $x$ to the weights of a multi-layer perceptron that maps from $(x_r, d_r)$ to $(\gamma, \sigma)$.
One advantage of the ProbNeRF model is that it is easy to evaluate the prior density $p(x)$.
We also validate our method on more powerful denoising diffusion priors (\Cref{sec:diffusion_scene_priors}) where $p(x)$ is not easy to evaluate, for which we develop a novel inference algorithm (\Cref{sec:diffusion_conditioning}).

\textbf{Corruption representation and prior}
We focus on corruptions to the 3D scenes such as floaters or weather artifacts like rain, snow or fog, although our approach generalizes to sensor corruptions like camera intrinsics noise (\Cref{sec:results}).
\Cref{fig:example_corruptions} shows some examples of how corruptions affect the observed images.
For camera intrinsics, we focus on inferring the field-of-view (FOV), which is one of the camera intrinsic parameters of the pinhole camera parameterization.
We represent the 3D corruptions $c$ as parameters of another NeRF. 
Unlike the scene $x$, we don't require a strong prior over $c$.
In our experiments, we assume an improper prior $p(c) \propto 1$.
This means that we don't need to know the family of corruptions ahead of time; the corruption can be any 3D entity ranging from weather artifacts and floaters to unwanted objects.

\begin{figure}[!tb]
  \centering
    \includegraphics[scale=0.55]{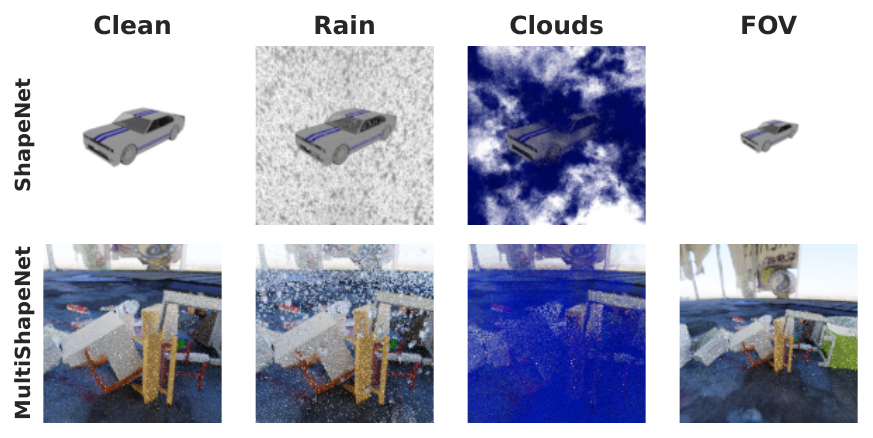}
    \vspace{-1em}
    \caption{
        Example corruptions. FOV refers to the field-of-view intrinsic parameter of the pinhole camera parameterization.
    }
    \label{fig:example_corruptions}
    \vspace{-2em}
\end{figure}

\textbf{Likelihood}
To render an image $y$ given the scene latent $x$ and corruption $c$, we compose the respective NeRF outputs.
Given a ray position and direction $(x_r, d_r)$ we compose the outputs of the scene NeRF $(\gamma_z, \sigma_z)$ and corruption NeRF $(\gamma_c, \sigma_c)$ as $\sigma = \sigma_z + \sigma_c, \gamma = (\gamma_z \sigma_z  + \gamma_c \sigma_c) / \sigma$ \citep{niemeyer2021giraffe}.
We denote rendering of the composed NeRF as $y = R(x, c)$.
The likelihood is a per-pixel-and-channel Gaussian $p(y | x, c) = \prod_{\text{pixel } i \text{ and channel } j} \mathcal N(y_{ij} | R(x, c)_{ij}, \sigma_y^2)$ where $\sigma_y^2$ is the observation noise variance.

\begin{figure}[!tb]
  \centering
    \includegraphics[scale=0.75]{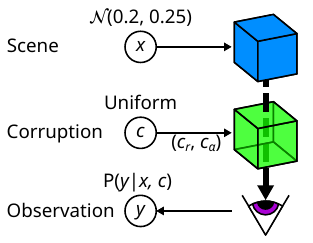}
    \includegraphics[scale=0.55]{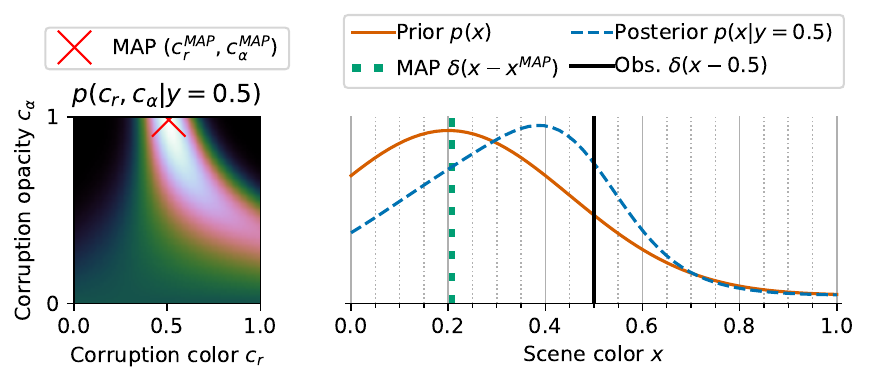}
    \vspace{-1em}
\caption{
    A toy model of full posterior inference avoiding ``billboard'' solutions of MAP.
    See main text for details.
}
    \label{fig:toy_diagram}
    \vspace{-1em}
\end{figure}

\textbf{MAP inference isn't enough}
A straightforward approach to inferring the scene $x$ would be to find the MAP estimate $(x^*, c^*)$  that maximizes $p(x)p(c)p(y | x, c)$.
However, this approach leads to ``billboard'' solutions, where the corruption $c$ ends up explaining the scene, like a billboard placed in front of the camera\footnote{This ``billboard'' need not be flat.}.
We prove this below:

\begin{proposition}
Assume $p(c) \propto 1$, for any $x$ there exists a $c$ such that $R(x, c) = y$, and $p(y\mid x, c)$ is maximized if and only if $R(x, c) = y$. Then the set of MAP solutions is
\begin{align}
    &\argmax_{x, c} p(y, x | c)p(c) = \argmax_{x, c} p(y, x | c) \nonumber\\
    &= \{x, c: y = R(x, c) \text{ and } x \in \argmax_x p(x)\}, \label{eq:argmax}
\end{align}
that is, $x$ is the maximum \emph{a-priori} scene that renders exactly to $y$, either because $c$ covers it completely or because the uncovered parts happen to render to $y$.
\end{proposition}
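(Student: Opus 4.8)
The plan is to exploit the fact that, under the stated assumptions, the likelihood can always be driven to its global maximum by a suitable choice of $c$, and that this maximal likelihood value is the \emph{same} for every $x$. This decouples the two factors in the objective and collapses the joint maximization into maximizing the prior $p(x)$ alone.

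First I would dispatch the first equality. Since $x$ and $c$ are a priori independent, $p(y, x \mid c) = p(x)\,p(y \mid x, c)$; and because $p(c) \propto 1$ is a positive constant, multiplying the objective by $p(c)$ rescales it uniformly and leaves the set of maximizers unchanged. Hence
\begin{align}
\argmax_{x,c} p(y,x\mid c)\,p(c) = \argmax_{x,c} p(y,x\mid c) = \argmax_{x,c} p(x)\,p(y\mid x,c). \nonumber
\end{align}
Next comes the key observation. Whenever $R(x,c) = y$, the likelihood is $p(y\mid x,c) = \prod_{ij}\mathcal N(y_{ij}\mid y_{ij}, \sigma_y^2)$, the peak value of the Gaussian, which is a constant $L^\star$ independent of $x$ and $c$. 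By hypothesis $p(y\mid x,c)$ is maximized exactly when $R(x,c)=y$, so $L^\star = \max_c p(y\mid x,c)$, and by the assumption that for every $x$ some $c$ achieves $R(x,c)=y$, this maximum is attained for every $x$. Maximizing over $c$ first therefore yields $\max_c p(x)\,p(y\mid x,c) = p(x)\,L^\star$ for each fixed $x$, whence $\max_{x,c} p(x)\,p(y\mid x,c) = L^\star\,\max_x p(x)$.

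It then remains to characterize the maximizers, which gives the two inclusions. For the ``if'' direction, any pair with $x\in\argmax_x p(x)$ and $R(x,c)=y$ attains the product $L^\star\,\max_x p(x)$ and is thus optimal. For the ``only if'' direction I would use that both factors are nonnegative and bounded above, $p(x)\le \max_x p(x)$ and $p(y\mid x,c)\le L^\star$, so a product equal to $L^\star\,\max_x p(x)$ forces each factor to meet its bound, i.e.\ $x\in\argmax_x p(x)$ and $R(x,c)=y$. This establishes the claimed set equality.

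The step worth stating carefully, rather than the step that is technically hard, is the independence of $L^\star$ from $x$: this is precisely why MAP is pathological here, since the corruption can saturate the likelihood for \emph{any} scene, so the data term exerts no preference over $x$ and the prior alone selects the solution, yielding the billboard degeneracy. The only mild subtlety is the elementary ``product meets its bound'' argument, which needs both factors to be strictly positive at the optimum; this holds because the maximal Gaussian density $L^\star$ and $\max_x p(x)$ are both positive.
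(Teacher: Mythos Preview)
Your proposal is correct and follows essentially the same route as the paper: both arguments define $L^\star=\max_{x,c}p(y\mid x,c)$, observe that the assumption ``for every $x$ some $c$ achieves $R(x,c)=y$'' makes $\max_c p(y\mid x,c)=L^\star$ for all $x$, conclude that the maximum of the product is $L^\star\cdot\max_x p(x)$, and then verify the two inclusions. Your version is slightly more explicit in justifying the first equality and in flagging the positivity needed for the product-meets-its-bound step, but there is no substantive difference in method.
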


\begin{proof}
Let the maximum likelihood, prior, and joint probability values be $L^* := \max_{x, c} p(y | x, c)$, $P^* = \max_x p(x)$ and $V^* := \max_{x, c} p(y, x | c) = \max_x (p(x)\max_c p(y\mid x, c)) = \max_x p(x)L^* = L^*P^*$.
All elements in set \eqref{eq:argmax} attain the value $p(y | x, c)p(x)=V^*$, hence they belong to the argmax set.
Any $(x, c)$ not in set \eqref{eq:argmax} attains a value of $p(y| x, c)p(x)$ smaller than $V^*$: if $R(x, c)\neq y$, the likelihood is smaller than $L^*$; if $x \notin \argmax_x p(x)$, the prior density is smaller than $P^*$; in either case the joint is less than $L^*P^*$.
Hence set \eqref{eq:argmax} contains all argmax values.
\end{proof}

\textbf{Full posterior inference as an effective alternative}
In RIG, we perform full posterior inference to obtain the underlying scene $x, c \sim p(x, c | y) \propto p(x)p(c)p(y | x, c)$ which empirically circumvents the billboard solution (\Cref{sec:results}).
Intuitively, this can be seen as an instance of the mode not being the same as the typical set.
The region around the mode where the corruption completely covers the scene has high density but low volume---there aren't many corruptions that render exactly to the observed image.
On the other hand, the posterior takes into account both density and volume, concentrating on regions with high probability mass---there are many non-billboard corruptions that together with a correct scene render to the observed image, although each such solution may have low density.

We illustrate this in a toy example (\Cref{fig:toy_diagram}) where the scene is defined by a single opaque pixel with a one-channel color $x \sim \mathcal{N}(0.2, 0.25)$ (the distribution is truncated and normalized on $[0, 1]$) and a single semi-transparent floater corruption $c := (c_r, c_\alpha)$ between it and the camera with color $c_r \sim \mathcal{U}(0, 1)$ and opacity $c_\alpha \sim \mathcal{U}(0, 1)$.
We observe the color $y$ under a likelihood $\mathcal N(\mu(z, c), 0.1^2)$ whose mean is the result of alpha blending the scene and the corruption elements.
The MAP solution that maximizes $p(c_r, c_\alpha, x | y=0.5)$ assigns the corruption opacity $c_\alpha = 1$ (\Cref{fig:toy_diagram}, bottom-left, red cross), completely obscuring the scene.
This causes the MAP solution for $x$ to revert to the maximum a-priori value (\Cref{fig:toy_diagram}, bottom-right, dotted and solid line).
However if we perform full posterior inference (quadrature in this case), we observe high probability mass in regions where $c_\alpha < 1$ (\Cref{fig:toy_diagram}, bottom-left), where the corruption doesn't completely obscure the scene.
As a result, the posterior over the scene $x$ has more mass around the value of $y$ (\Cref{fig:toy_diagram}, bottom-right, dashed line).

\textbf{Variational inference}
We use variational inference where we optimize the evidence lower bound (ELBO) with respect to a guide distribution $q(x, c)$:
\begin{align}
    \ELBO(q) = \E_{q(x, c)}[\log p(y, x | c) - \log q(x, c)].
\end{align}
We use a mean-field parameterization where $x$ and $c$ are independent $q(x, c) = q(x)q(c)$, with each dimension of $x$ and $c$ being parameterized by a separate Gaussian mean and log standard deviation.
We optimize these parameters using stochastic gradients of the ELBO, estimated via the path derivative estimator \citep{roeder2017sticking}.
We run the optimization multiple times (8 in our experiments) and pick the run with the largest ELBO to avoid getting stuck in local optima \citep[cf. Multi-IVON from][]{shen2024variational}.

\section{Diffusion Scene Priors}
\label{sec:diffusion_scene_priors}

Denoising diffusion \citep{ho2020denoising} has emerged as a powerful alternative to normalizing flows.
While it is possible to directly replace the RealNVP used in ProbNeRF with a diffusion-based prior \citep[e.g.][]{dupont2022data}, diffusion models allow us to tractably increase the dimensionality of our latent representation.
A high dimensional latent space enables high fidelity samples and reconstructions.
We build on the Single-Stage Diffusion NeRF (SSDNeRF) framework \citep{chen2023singlestage} to train the scene prior.
SSDNeRF optimizes a set of per-training-example latents $\{x_n\}$, also known as GLO latents \citep{bojanowski2018optimizing}, the diffusion prior $p_\phi(x)$ parameterized by $\phi$, and the likelihood $p_\psi(y | x)$ parameterized by $\psi$.
See \Cref{app:priors} for additional details.

\textbf{Diffusion models}
A diffusion model is a latent variable generative model comprising a forward and a reverse process.
The forward diffusion process $q(z | x)$ starts from data $x$ and sequentially adds Gaussian noise to produce a set of latent variables $z := \{z_t; t \in [0, 1]\}$ where $t = 0$ has least noise and $t = 1$ has most noise.
The forward process is defined through its marginals $q(z_t | x) = \mathcal N(z_t; \alpha_t x, \sigma_t^2 I)$ with $\alpha_t$ and $\sigma_t$ following a schedule where the signal-to-noise ratio $\alpha_t^2 / \sigma_t^2$ decreases as $t$ increases.
We use the variance-preserving schedule where $\alpha_t^2 = 1 - \sigma_t^2$.

The reverse diffusion process $p_\phi(x, z)$, parameterized by $\phi$, starts from $p(z_1) = \mathcal N(z_1; 0, I)$ and removes noise at each step via $p_\phi(z_s | z_t)$ ($0 \leq s < t \leq 1$) to produce gradually less noisy latents until $z_0$ which is decoded into $x$ via a fixed process $p(x | z_0)$.
In practice, this is done by picking a finite number of discretization bins $T$ in $[0, 1]$.
The reverse process is trained to match $q(x)q(z | x)$ given a target data distribution $q(x)$ so that the marginal $p_\phi(x)$ ends up matching $q(x)$.
In the continuous limit ($T \to \infty$), the optimal $p_\phi(z_s | z_t)$ is a Gaussian which can be parameterized as the data-conditional forward process $q(z_s | z_t, x = x_\phi(z_t; t))$ where the data is given by a denoising model $x_\phi(z_t; t)$ that predicts $x$ given $z_t$ at time $t$.
In practice, we parameterize the noise content of $z_t$ via $\epsilon_\phi(z_t; t)$ such that $x_\phi(z_t; t) = (z_t - \sigma_t \epsilon_\phi(z_t; t)) / \alpha_t$ and train it by minimizing the loss
\begin{align}
    \mathcal L_{\text{diff}}(\phi, x) = \E_{t \sim U(0, 1), \epsilon \sim \mathcal N(0, I)}\left[w(t) \| \epsilon_\phi(z_t; t) - \epsilon \|^2\right],
\end{align}
where $w(t)$ is a weight schedule, often set to one.
We pick $w(t)$ so that the negative loss is the evidence lower bound $-\mathcal L_{\text{diff}}(\phi, x) \leq \log p_\theta(x)$ \citep{kingma2021variational}.

\textbf{NeRF representations}
The diffusion prior is defined over a NeRF representation $x$.
We experiment with triplanes, the representation used in SSDNeRF, as well as a set latent representation based on the scene representation transformer (SRT) \citep{sajjadi2022scene}.

While we found that the triplane representation---which uses a UNet denoiser \citep{ronneberger2015unet}---was sufficient for the ShapeNet dataset, SRT's set latents performed much better on the more complex MultiShapeNet dataset.
Since SRT doesn't learn a prior over $x$, we adopt a transformer-based denoiser that is permutation invariant based on Point-E \citep{nichol2022point}.
In our experiments, we report performance of the triplane representation for ShapeNet and SRT's set latents for MultiShapeNet.

In both cases we train a decoder, parameterized by $\psi$, mapping from the representation of a ray position and direction $(x_r, d_r)$ to a color and a density $(\gamma, \sigma)$.
For triplanes, it is a small MLP and for set latents, it is SRT's decoder transformer.
Like before, we can form a renderer $R(x)$ (omitting camera parameters $\zeta$) and the corresponding likelihood $p_\psi(y | x)$, and the corresponding variants for the case with corruption: $R(x, c)$, $p_\psi(y | x, c)$.

\textbf{Training}
We adopt SSDNeRF's training procedure where given a training dataset $\{y_n\}_{n = 1}^N \sim p(y)$, we maintain a set of per-training-example latents $\{x_n\}_{n = 1}^N$ which are co-trained with the prior and likelihood parameters $(\phi, \psi)$.
Interpreting the negative log likelihood as the reconstruction loss $\mathcal L_{\text{rec}}(y, x, \psi) = \log p_\psi(y | x)$, the training loss is
\begin{align}
    &\mathcal L(\{x_n\}, \phi, \psi) = \\
    &\E_{n \sim U(\{1, \dotsc, N\})}\left[\lambda_{\text{rec}} \mathcal L_{\text{rec}}(y_n, x_n, \psi) + \lambda_{\text{diff}} \mathcal L_\text{diff}(\phi, x_n)\right] \nonumber
\end{align}
where $\lambda_{\text{rec}}, \lambda_{\text{diff}} > 0$ weighting factors.
The negative loss, $-\mathcal L(\{x_n\}, \phi, \psi) \leq \E_{p(y)}\left[\log p_{\theta, \psi}(y)\right]$, can be interpreted as an ELBO where the guide distribution is a delta mass on $x_n$ if $y = y_n$, $q_{\{x_n\}}(x | y_n) = \delta_{x_n}(x)$, making the guide's entropy term in ELBO vanish; the guide is well-defined during training.
Once trained, we can discard the latents and use $p_\phi(x) p_\psi(y | x)$ as the scene prior. See \Cref{app:training} for additional details.

\section{Diffusion Conditioning with Auxiliary Latents}
\label{sec:diffusion_conditioning}

Given a diffusion model, we can sample from $p(x)$ by iteratively denoising $z_1 \sim \mathcal N(0, I)$ until $z_0$ and decoding $x \sim p(x | z_0)$, however it is not easy to evaluate the density $p(x)$ due to the required marginalization of $z$.
This makes it difficult to infer $p(x | y) \propto p(x)p(y | x)$ as well as $p(x, c | y) \propto p(x)p(c)p(y | x, c)$.
We review reconstruction-guidance as a method for solving the former and propose reconstruction-guidance with auxiliary latents (ReGAL) for solving the latter.
\begin{algorithm}[!tb]
   \caption{Reconstruction-guidance diffusion conditioning with auxiliary latents (ReGAL)}
   \label{alg:regal}
\begin{algorithmic}[1]
   \STATE {\bfseries Input:} Diffusion prior denoiser $\epsilon_\phi(z_t; t)$, prior over the auxiliary latent $p(c | x)$, likelihood $p(y | x, c)$, Langevin step size $\delta$, number of discretization bins $T$ and a discretization schedule $s(i) = (i - 1) / T, t(i) = i / T$, observation $y$.
   \STATE {\bfseries Output:} Approximate sample $x, c$ from $p(x, c | y) \propto p(x)p(c | x)p(y | x, c)$.
   \STATE Initialize $z_1 \sim p(z_1) = \mathcal N(z_1; 0, 1)$ and $c_1 \sim p(c | z_1)$.
   \FOR{$i = T, \dotsc, 1$}
   \STATE Sample $z_{s(i)} \sim \hat p(z_{s(i)} | z_{t(i)}, y)$.
   \STATE Sample $c_{s(i)} \sim \hat p(c_{s(i)} | z_{s(i)}, c_{t(i)}, y; \delta)$.
   \ENDFOR
   \STATE Sample $x \sim p(x | z_0), c \sim \hat p(c | x, c_0, y; \delta)$.
   \STATE {\bfseries Return:} $x, c$.
\end{algorithmic}
\end{algorithm}

\textbf{Reconstruction-guidance}
Reconstruction-guidance conditioning \citep{ho2022video} modifies the unconditional sampling process by sampling $z_s \sim p(z_s | z_t, y)$ instead of from $p(z_s | z_t)$ at each discretization step.
Since $p(z_s | z_t, y)$ is intractable, we approximate it by augmenting the score function in $p(z_s | z_t)$ by the gradient of the log likelihood.
The score of the diffusion latent variable at time $t$ can be estimated as $\nabla_{z_t} \log q(z_t) \approx s_\phi(z_t; t) = -\epsilon_\phi(z_t; t) / \sigma_t$ and the reverse diffusion process can be written in terms of this score as
\begin{align}
    p(z_s | z_t) = \mathcal N\left(z_s; \frac{z_t}{\alpha_{t | s}}  + \frac{\sigma_{t | s}^2}{\alpha_{t | s}} s_\phi(z_t; t), \frac{\sigma_{t | s}^2 \sigma_s^2}{\sigma_t^2} I\right) \label{eq:score_based_reverse_difusion}
\end{align}
where $\alpha_{t | s} = \alpha_t / \alpha_s$ and $\sigma_{t | s}^2 = \sigma_t^2 - \alpha_{t | s}^2 \sigma_s^2$ \citep{kingma2021variational}.
In reconstruction-guidance conditioning, we replace the score term in \eqref{eq:score_based_reverse_difusion} by a ``reconstruction-guided'' score which approximates the posterior score $\nabla_{z_t} \log p(z_t | y)$ as
\begin{align}
    \tilde s_\phi(z_t, y; t) := s_\phi(z_t; t) + \nabla_{z_t} \log \tilde p(y | z_t) \label{eq:reconstruction_guided_score}
\end{align}
where the intractable likelihood $p(y | z_t)$ is approximated as $\tilde p(y | z_t) := p(y | x = x_\phi(z_t; t))$, using $x$'s ``reconstruction'':
\begin{align}
    \hat p(z_s | z_t, y) := \mathcal N\left(z_s; \frac{z_t}{\alpha_{t | s}}  + \frac{\sigma_{t | s}^2}{\alpha_{t | s}} \tilde s_\phi(z_t, y; t), \frac{\sigma_{t | s}^2 \sigma_s^2}{\sigma_t^2} I\right). \label{eq:reconstruction_guidance_conditional}
\end{align}

\textbf{Reconstruction-guidance with auxiliary latents (ReGAL)}
In RIG, we have a diffusion prior over the scene $p(x)$ as well as a prior over the corruptions $p(c)$.
This is an instance of a more general setting where $c$ is an auxiliary latent variable with a prior $p(c | x)$.
We propose ReGAL (\Cref{alg:regal}) for approximating the posterior $p(c, x | y) \propto p(x)p(c | x)p(y | x, c)$ for a given likelihood $p(y | x, c)$, such as the scene-and-corruption renderer in RIG.

ReGAL alternates between sampling from the reconstruction-guidance conditional $\hat p(z_s | z_t, y)$ \eqref{eq:reconstruction_guidance_conditional} and a Langevin update \citep{welling2011bayesian} on $c$:
\begin{align}
    \hat p(c_s | z_t, c_t, y; \delta) := \mathcal N\left(c_s; c_t + \frac{\delta}{2} \tilde s_\phi^c(z_t, c_t, y; t), \delta I\right) \label{eq:langevin_step}
\end{align}
where $\delta$ is a step size and the score of the target distribution $p(c | x, y)$ is approximated as
\begin{align}
    &\tilde s_\phi^c(z_t, c_t, y; t) := \label{eq:regal_score}\\
    &\nabla_{c_t} \log p(c_t | x = x_\phi(z_t; t)) + \nabla_{c_t} \log p(y | x = x_\phi(z_t; t), c_t) \nonumber 
\end{align}
Running these Langevin updates results in a sample from the posterior $p(c | x, y)$.
Like reconstruction-guidance conditioning, ReGAL does not converge to the target distribution of interest since it relies on score function estimation and marginalization by denoising.
In most cases, we find that ReGAL produces compelling results for RIG tasks as it intuitively moves $(x, c)$ towards a region with high likelihood $p(y | x, c)$ via the likelihood score.

In some experiments, we find that this basic version of ReGAL is not enough as it samples in an ``open-loop'', without re-checking the samples against the posterior of interest.
To address this problem, we re-interpret ReGAL as an importance sampling algorithm which produces $K$ samples $\{(x_k, c_k), w_k\}_{k = 1}^K$.
An importance weight $w_k$ is a ratio between the unnormalized target distribution---in this case formed by multiplying the diffusion prior over all the noised variables times the likelihood $p(z) p(x | z)p(c | x)p(y | x, c)$---and the ReGAL proposal.
These importance weights are used to check the quality of proposed samples against the posterior of interest.
When $K > 1$, ReGAL has convergence guarantees consistent with the classical importance sampling results.
The weighted set of $K$ samples can be used to estimate the posterior expectation of any test function $f$, $\E_{p(x, c | y)}[f(x, c)]$ as $\sum_{k = 1}^K w_k f(x_k, c_k) / \sum_{k = 1}^K w_k$.
This estimator is consistent: it converges to the posterior expectation with increasing $K$ \citep{owen2013monte}.
We can also resample from the weighted empirical distribution $\sum_{k = 1}^K w_k \delta_{(x_k, c_k)}(x, c) / \sum_{k = 1}^K w_k$ to get an unweighted sample.
See \Cref{app:is_regal} and \Cref{alg:is_regal} for an extended description of how to incorporate the $K$ hyperparameter into ReGAL.

\begin{figure*}[!tb]
  \centering
    \includegraphics[scale=0.55]{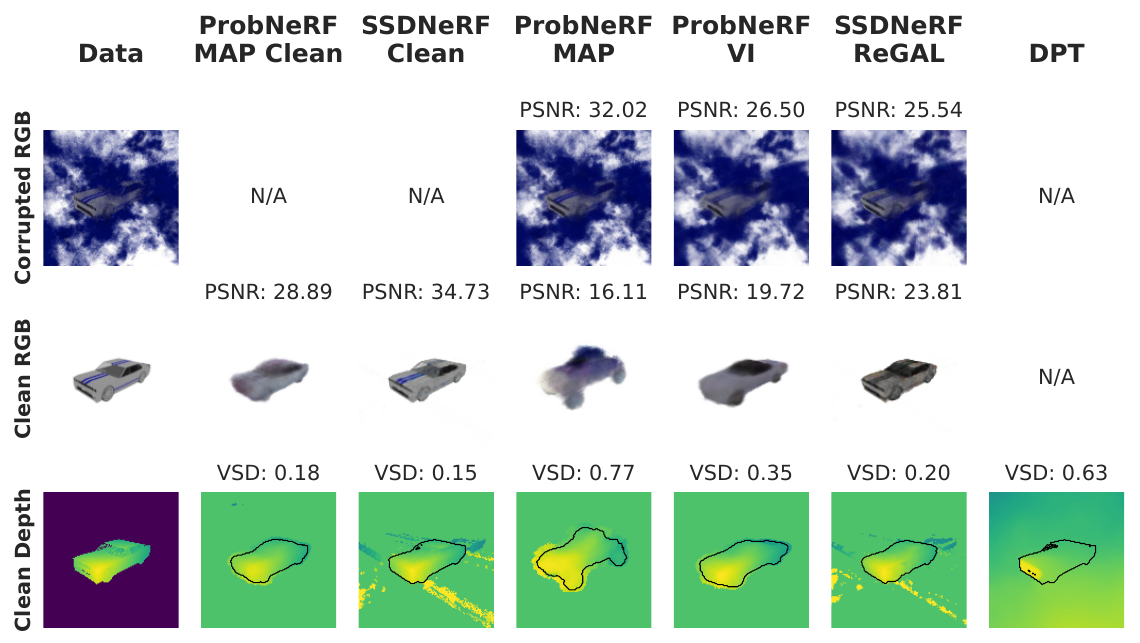}
    \includegraphics[scale=0.55]{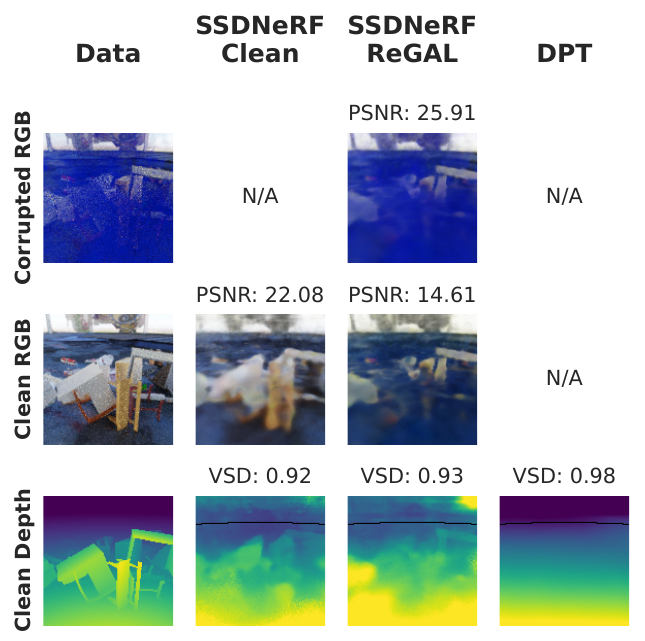}
    \vspace{-1em}
\caption{
    Example decorruptions for the cloud corruption. The \emph{Clean} columns are conditioned on the \emph{Clean RGB} data, while the rest are conditioned on the \emph{Corrupted RGB} data.
    For ShapeNet, the black outlines on the depth images are the predicted masks, except in the case of DPT where ground truth mask is used. For MultiShapeNet, ground truth mask is used for all.
}
    \vspace{-1em}
    \label{fig:recons}
\end{figure*}
\begin{figure}[!tb]
  \centering
    \includegraphics[scale=0.55]{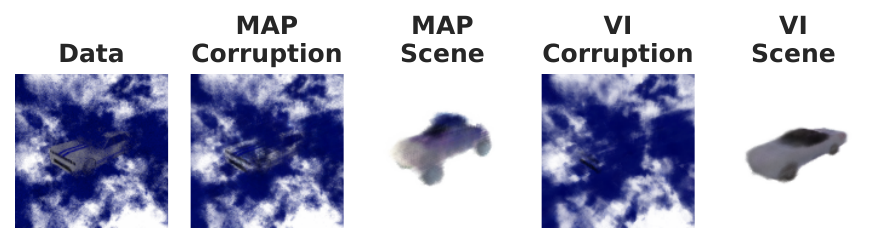}
    \vspace{-1em}
\caption{
    MAP solution uses the corruption NeRF to explain the observation more than the VI solution.
}
\vspace{-2em}
    \label{fig:billboard}
\end{figure}

\section{Related Work}

\textbf{Robust Monocular Depth Estimation}
Most robust depth estimation algorithms are trained via regression on a diverse set of datasets and are not expected to generalize far from the training distribution \citep{zhao2022unsupervised, gasperini2023robust, zhu2023ecdepth, saunders2023selfsupervised}.
The primary challenge of these works is how to generate a sufficient amount of data to train the regression models across enough conditions.
In comparison, our method requires only a coarse model of the corruptions ahead of time.

\textbf{Robust Multi-View Reconstruction}
In the field of NeRF training the problem of robust reconstruction is well explored.
Methods that use robust losses \citep{sabour2023robustnerf}, 3D regularization \citep{wynn2023diffusionerf, warburg2023nerfbusters} and uncertainty estimation \citep{goli2023bayes} have produced impressive results.
Most of those methods focus on the removal of floaters, and while certain corruptions considered in our work (e.g. rain/clouds) are similar to floaters, it is not obvious how to extend these robustness techniques to other corruption types. 

\textbf{Analysis by Synthesis} Using generative models in visual perception is rare in modern systems in part due to performance issues.
A number of systems combine NeRFs with structured generative models with nuisance variables \citep{yuan2020star, verbin2021refnerf, park2023camp, ost2021neural}.
We follow this line of work and extend it with priors over NeRFs and probabilistic inference over the joint distribution of scenes and auxiliary parameters.
While it is not our main focus, the posterior samples can be used to quantify scene uncertainty \citep{guo2020bayesian,rodriguez2023umat,hoffman2023probnerf}.

\textbf{Diffusion conditioning}
To the best of our knowledge, we are the first to consider the problem of diffusion conditioning with auxiliary latents: conditioning models where a diffusion model is used as a prior only over a subset of latent variables.
Our proposed method, ReGAL, bears similarities to existing diffusion conditioning methods.
Unlike classifier-free guidance \citep{ho2021classifier} and classifier-guided diffusion \citep{dhariwal2021diffusion} which require retraining for each new likelihood, we focus on the case when this is undesirable or even impossible.
A common approach for addressing this problem is reconstruction-guidance \citep{ho2022video} which ReGAL builds upon.
\citet{wu2023practical} propose an sequential Monte Carlo (SMC)-based generalization of reconstruction-guidance which comes with convergence guarantees like ReGAL with $K > 1$.
However, it isn't applicable to models with auxiliary latents.
While it is possible to design an SMC version of ReGAL (\Cref{app:smc_regal}), it didn't improve performance metrics in our domain.

\begin{figure*}[!tb]
  \centering
    \includegraphics[scale=0.55]{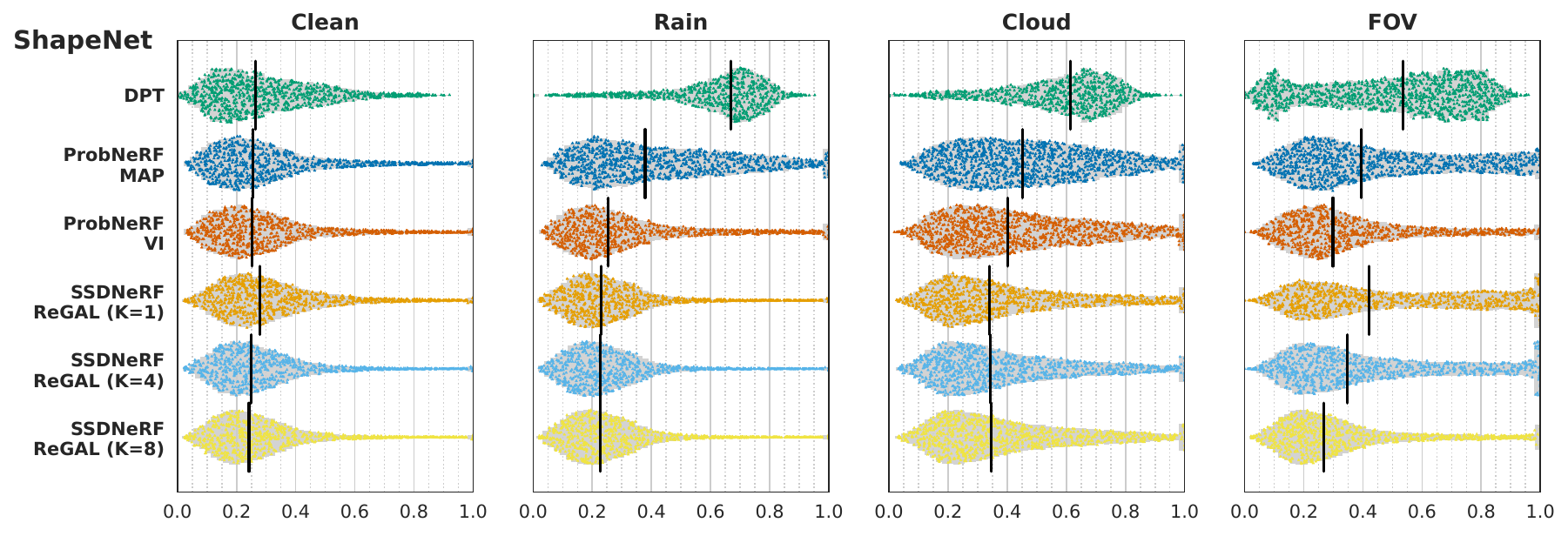}
    \includegraphics[scale=0.55]{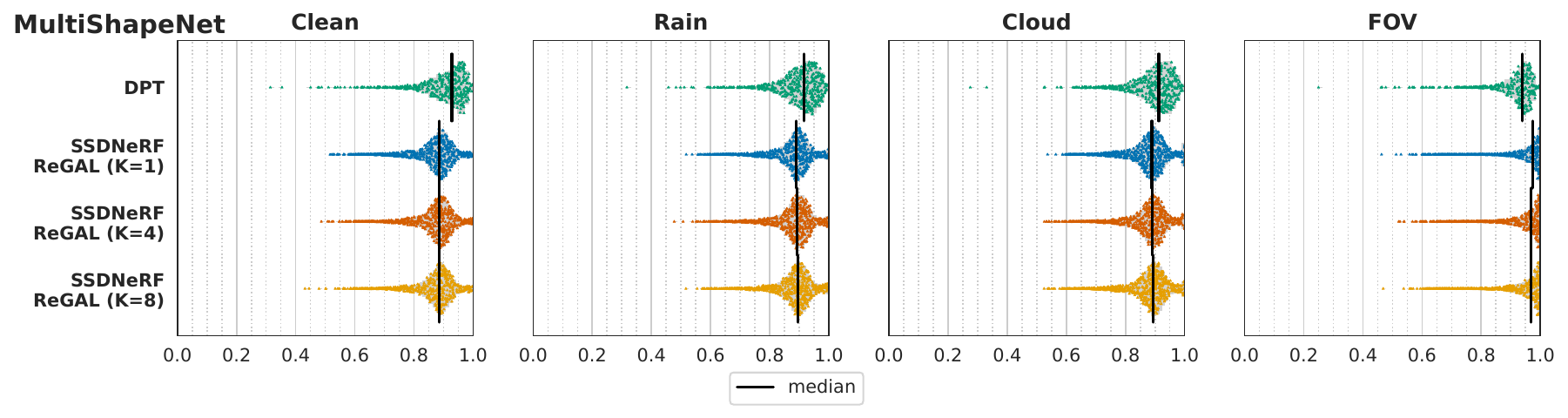}
    \vspace{-1.5em}
\caption{
    VSD($\leftarrow$) histograms across all conditions. Lower is better.
}
    \vspace{-1.5em}
    \label{fig:vsd_polkagrams}
\end{figure*}

\section{Experiments}

The core benchmark we explore is monocular depth estimation in presence of corruptions.
Monocular depth estimation is useful in context of autonomous car navigation \citep{jing2022depth} and robotics \citep{zhou2019does}.
In this task, the models are given a single RGB image of a scene and must predict a depth image, representing, for each pixel, the distance of the corresponding object surface element from the camera.
We also evaluate the quality of RGB reconstructions from multiple test views given one view of a scene with corruptions.
The source code for many of these experiments is available at \url{https://github.com/tensorflow/probability/tree/main/discussion/robust_inverse_graphics}.

\textbf{Datasets} We evaluate our method on two datasets.
For the first dataset, we use the cars category from ShapeNet \citep{chang2015shapenet}.
The dataset consists of 3486 cars, where 3137 are used for training and the remaining 349 for evaluation.
The cars are placed at the origin and have a canonical orientation, but often vary in their scale.
For each training scene, we generate 50 camera positions randomly from a unit sphere, all oriented towards the scene origin.
For testing, the camera positions are generated as $\{\cos(k \pi / 8), \pi / 8, \sin(k \pi / 8)\} | k \in [0, 16)$.

For the second dataset, we use the MultiShapeNet (MSN) dataset \citep{sajjadi2022scene}.
This dataset consists of objects from a number of categories from ShapeNet placed in a square region (side length 5) on a randomly textured ground plane, and lit by a randomly chosen HDRI dome.
Aside from adding our corruptions, we follow \citet{sajjadi2022scene} for the scene and camera positions.
We generate 10000 training scenes and 300 evaluation scenes.
The training set scenes have 100 camera positions, and testing ones have 10.

For the rain and fog corruptions we use the Blender-provided particle and volumetric effects.
For the FOV, we randomly sample the FOV from $[\pi / 4, 3 \pi / 4]$.
We use Kubric \citep{greff2022kubric}, which uses Blender to render and provide ground truth RGB, depth and segmentation images.
All images have resolution of $128 \times 128$.

\textbf{Metrics} For depth estimation, we use the visible surface discrepancy (VSD) metric \citep{hodan2016evaluation}, defined as:
\begin{align*}
    &VSD(d, M, \hat{d}, \hat{M}) = 1 - \\
    &\frac{1}{|M \cup \hat{M}|} \sum_{p \in M \cup \hat{M}} \mathds{1}(p \in M \cap \hat{M} \land |d(p) - \hat{d}(p)| < \tau),
\end{align*}
where $d$ and $\hat{d}$ are the true and predicted depth images, $M$ and $\hat{M}$ are the true and predicted object masks, $p$ is a pixel position, $\mathds{1}(\cdot)$ is an indicator function and $\tau$ is an accuracy threshold.
We use $\tau = 0.05$ for ShapeNet and $\tau = 0.1$ for MSN.
For NeRFs we define the mask as the region where the opacity of the object is greater than $0.5$.
Predicted depth is defined as the 95th percentile distance of the ray scattering.
VSD is a convenient metric because we are solely interested in recovering the depth maps corresponding to the objects, rather than the background, and VSD penalizes making mistakes about the object silhouette.
For MSN, we evaluate the depth predictions on the parts of the image that are explained by the ground plane and the objects on it.
This is operationally defined as regions of the image where depth is less than 30 units.

For images with volumetric corruptions, we compute VSD relative to the uncorrupted image (and omit the corruption NeRF when rendering the reconstruction).
For RGB image comparisons we use the PSNR metric, defined as $-10 \log_{10} \mathrm{MSE}$, where MSE is mean-squared-error across pixels and channels.

\begin{table*}[!htb]

\centering

\begin{tabular}{lllll}
\multicolumn{5}{c}{ShapeNet} \\
\toprule
Model & Clean & Rain & Cloud & FOV \\
\midrule
DPT & $0.30 \pm 0.007$ & $0.63 \pm 0.0065$ & $0.58 \pm 0.0071$ & $0.49 \pm 0.01$ \\
ProbNeRF
MAP & $0.30 \pm 0.0034$ & $0.43 \pm 0.0044$ & $0.48 \pm 0.0042$ & $0.46 \pm 0.0047$ \\
ProbNeRF
VI & $0.30 \pm 0.0034$ & $0.32 \pm 0.0039$ & $0.45 \pm 0.0043$ & $0.36 \pm 0.004$ \\
SSDNeRF
ReGAL (K=1) & $0.32 \pm 0.0032$ & $0.26 \pm 0.0027$ & $\mathbf{0.40 \pm 0.0043}$ & $0.49 \pm 0.0052$ \\
SSDNeRF
ReGAL (K=4) & $0.28 \pm 0.0029$ & $\mathbf{0.25 \pm 0.0027}$ & $\mathbf{0.40 \pm 0.0042}$ & $0.43 \pm 0.0049$ \\
SSDNeRF
ReGAL (K=8) & $\mathbf{0.27 \pm 0.0029}$ & $\mathbf{0.25 \pm 0.0027}$ & $0.41 \pm 0.0042$ & $\mathbf{0.34 \pm 0.0041}$ \\
\bottomrule
\multicolumn{5}{c}{MultiShapeNet} \\
\toprule
Model & Clean & Rain & Cloud & FOV \\
\midrule
DPT & $0.91 \pm 0.0037$ & $0.90 \pm 0.0035$ & $0.90 \pm 0.0035$ & $\mathbf{0.92 \pm 0.0031}$ \\
SSDNeRF
ReGAL (K=1) & $\mathbf{0.87 \pm 0.0015}$ & $\mathbf{0.88 \pm 0.0015}$ & $\mathbf{0.88 \pm 0.0015}$ & $0.95 \pm 0.0013$ \\
SSDNeRF
ReGAL (K=4) & $\mathbf{0.87 \pm 0.0015}$ & $\mathbf{0.88 \pm 0.0015}$ & $\mathbf{0.88 \pm 0.0015}$ & $0.95 \pm 0.0013$ \\
SSDNeRF
ReGAL (K=8) & $0.88 \pm 0.0015$ & $0.89 \pm 0.0014$ & $\mathbf{0.88 \pm 0.0015}$ & $0.95 \pm 0.0013$ \\
\bottomrule

\end{tabular}

\caption{
Mean VSD values across all conditions. Lower is better. The confidence intervals are 3 $\times$ SEM computed from a five runs via bootstrap.
}
\label{tab:vsd}
\vspace{-1em}
\end{table*}

\textbf{Baseline} We use DPT \citep{ranftl2021vision} as a baseline.
DPT is a powerful generic depth estimator, which uses a feedforward model that outputs the depth up to an affine transformation.
DPT is trained on a variety of indoor and outdoor scenes with ground truth depths.
When computing the VSD metric for this model, we use the ground-truth mask $M$ as its predicted $\hat{M}$ and determine the affine transformation of its depth by minimizing squared error between the model's predictions and ground truth depths in the masked region.
This gives this baseline a strong advantage in that it does not need to solve the scale-distance ambiguity.

\subsection{MAP vs full posterior inference}
\label{sec:results}

\Cref{fig:vsd_polkagrams} and \Cref{tab:vsd} summarize the depth reconstruction results.
For each algorithm and dataset, we aggregate the metrics across all views and all test scenes.
While for every algorithm there is considerable variation in the performance of the algorithms, in aggregate we see that probabilistic inference (ProbNeRF VI and SSDNeRF ReGAL) outperform the point estimates and the regression baseline on the corrupted scenes for ShapeNet.
For MSN, our prior does not model the data distribution with enough fidelity to resolve the fine details of objects (see \Cref{fig:msn_prior_samples}), and therefore does not decisively outperform the relatively coarse estimate that DPT provides.
For FOV estimation in particular, small errors in the estimated FOV parameter lead to large changes in the predicted depths, which is something DPT does not need to contend with due to the calibration procedure we use to evaluate its predictions.
Improving the quality of the prior would be a natural area of future improvement.

In addition to VSD, we also compute the PSNR metric for all conditions.
As with VSD, we remove the corruption from both the data and the reconstruction. \Cref{fig:psnr_polkagrams} and \Cref{tab:psnr} show that ReGAL outperforms the other conditions.
We do not have a regression baseline for this task.

For ReGAL, we vary the number of importance samples $K$. The decorruption quality is weakly dependent on this for most settings, but is is most noticeably beneficial for the FOV estimation task for ShapeNet.

\Cref{fig:recons} shows an illustrative scene decorruption result for the cloud corruption and several of the compared methods.
When possible, for each method we show the full reconstruction (scene and corruption), just the scene reconstruction and the depth image for the scene.
For ProbNeRF, we see that the VI full inference has a more faithful reconstruction than a MAP point estimate, due to MAP's over-reliance on the corruption parameters to explain most of the image, as can be also be seen in \Cref{fig:billboard}.
The more powerful prior used for SSDNeRF model produces the best reconstructions on clean images (``SSDNeRF Clean'' column of \Cref{fig:recons}) and, when used with ReGAL, the most accurate depth image.
For MSN, we do not have a point estimate baseline, as ProbNeRF is not expressive enough to represent MSN scenes.

Due to its design and training dataset, DPT cannot separately extract the depth image for the obscured sections of the car, and is penalized accordingly.
A more powerful depth estimator that supports multi-modal predictions \citep{saxena2023monocular, duan2023diffusion, ke2023repurposing}, if trained on corrupted images, could, in principle, match our method's performance.
Probabilistic inference in our model makes sharp predictions without requiring examples of corruptions.

\begin{figure*}[!htb]
  \centering
    \includegraphics[scale=0.55]{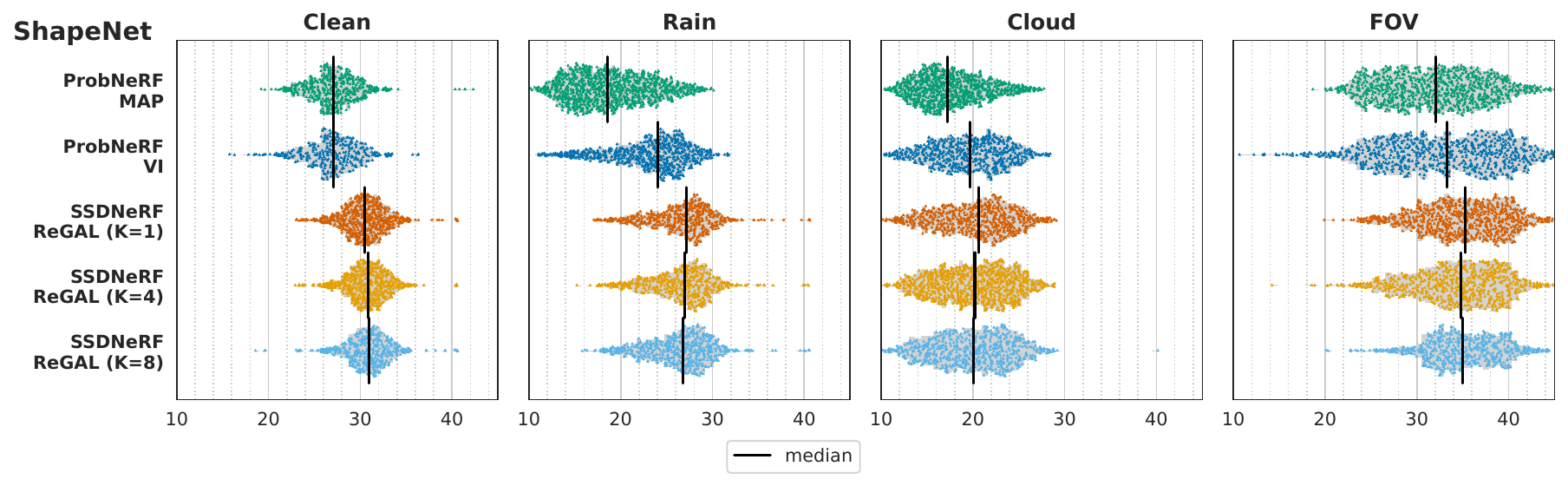}
    \includegraphics[scale=0.55]{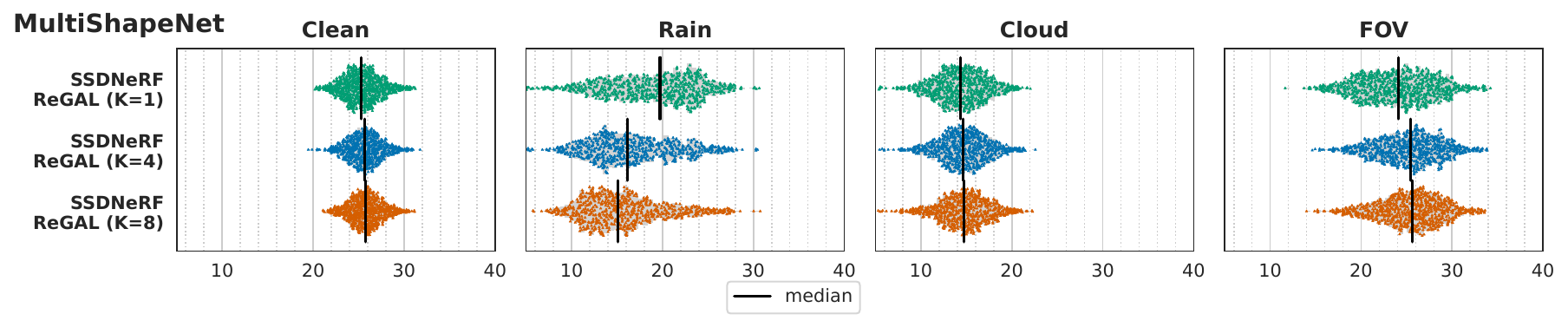}
    \vspace{-1em}
\caption{
    PSNR($\rightarrow$) histograms across all conditions.
    Higher is better.
}
    \label{fig:psnr_polkagrams}
    \vspace{-1em}
\end{figure*}

\begin{table*}[!htb]

\centering

\begin{tabular}{lllll}
\multicolumn{5}{c}{ShapeNet} \\
\toprule
Model & Clean & Rain & Cloud & FOV \\
\midrule
ProbNeRF
MAP & $27.08 \pm 0.17$ & $18.96 \pm 0.3$ & $17.68 \pm 0.25$ & $31.88 \pm 0.38$ \\
ProbNeRF
VI & $27.11 \pm 0.2$ & $23.22 \pm 0.29$ & $19.54 \pm 0.27$ & $32.97 \pm 0.47$ \\
SSDNeRF
ReGAL (K=1) & $30.49 \pm 0.16$ & $\mathbf{26.58 \pm 0.22}$ & $19.89 \pm 0.31$ & $35.08 \pm 0.32$ \\
SSDNeRF
ReGAL (K=4) & $30.80 \pm 0.15$ & $26.47 \pm 0.22$ & $\mathbf{19.95 \pm 0.29}$ & $34.44 \pm 0.33$ \\
SSDNeRF
ReGAL (K=8) & $\mathbf{30.87 \pm 0.15}$ & $26.32 \pm 0.23$ & $19.83 \pm 0.29$ & $\mathbf{35.22 \pm 0.27}$ \\
\bottomrule
\multicolumn{5}{c}{MultiShapeNet} \\
\toprule
Model & Clean & Rain & Cloud & FOV \\
\midrule
SSDNeRF
ReGAL (K=1) & $25.37 \pm 0.15$ & $\mathbf{19.01 \pm 0.38}$ & $14.40 \pm 0.21$ & $23.99 \pm 0.31$ \\
SSDNeRF
ReGAL (K=4) & $25.68 \pm 0.13$ & $16.74 \pm 0.35$ & $14.65 \pm 0.2$ & $25.19 \pm 0.26$ \\
SSDNeRF
ReGAL (K=8) & $\mathbf{25.79 \pm 0.13}$ & $15.65 \pm 0.34$ & $\mathbf{14.71 \pm 0.2}$ & $\mathbf{25.39 \pm 0.26}$ \\
\bottomrule
\end{tabular}

\caption{
Mean PSNR values across all conditions. Higher is better. The confidence intervals are 3 $\times$ SEM computed from a five runs via bootstrap. Note that the PSNR values for the FOV dataset are inflated because of the white background which takes up a higher fraction of the image for larger fields-of-view values.
}
\label{tab:psnr}
\vspace{-1em}
\end{table*}

\section{Discussion}
We have presented a general strategy in attaining robustness for monocular depth estimation using probabilistic inference using scene priors.
For diffusion scene priors, we develop a general-purpose diffusion conditioning approach, ReGAL, which when applied to our domain produces the best performance in our experiments.

There are two key limitations and areas of improvement for our work.
The reliance on a 3D prior makes it difficult to apply our method to robust perception of real world scenes, as it requires a 3D prior over real world scenes (see \Cref{app:prior_samples} for prior samples from models used in the experiments).
While a lot of progress has been made in unconditional 3D models \citep{bautista2022gaudi, chen2023singlestage, chai2023persistent, kim2023nfldm}, their fidelity and breadth of coverage of real world scenes remains small.

A second limitation is that of inference speed.
Performing probabilistic inference for every image is prohibitively slow for the majority of applications.
For diffusion models, there exist approaches to accelerate sampling \citep[e.g.][]{song2023consistency}.
Real-time performance, however, will require amortized inference, likely by training a parameterized encoder.
The challenge of this approach is how to attain this speedup without losing the flexibility and robustness of our method.

\section*{Acknowledgements}

Thanks to Jyh-Jing Hwang for his helpful comments and Mehdi S. M. Sajjadi for his helpful comments and help with setting up the scene representation transformer model.

\section*{Impact Statement}

This paper presents work whose goal is to advance the field of Machine Learning. There are many potential societal consequences of our work, none which we feel must be specifically highlighted here.

\bibliography{main}
\bibliographystyle{icml2024}

\newpage
\appendix
\onecolumn

\section{NeRF Rendering Details}

\label{app:nerf_rendering}

We follow \citet{barron2021mipnerf} and use hierarchical sampling.
Given a ray $(x_r, d_r)$ we first query the proposal NeRF using 48 regularly spaced points between near $n$ and far $f$ values. For ShapeNet, $n=0.2, f=1.5$. For MSN, $n=0.2, f=30$.
The density values evaluated at those samples are used to form a proposal distribution, and 48 samples are taken from the final NeRF, and to compute the rendered color and depth.
We use the same NeRF network for both steps, and rely on integrated position encoding to inform the neural networks of the level of detail they should represent.
For grid representations (triplanes), we use multisampling by deterministically generating sigma points from the Gaussians associated with each ray sample using the unscented transformation from \citet{menegaz2011new}.
This a simplification of the approach used in \citet{barron2023zipnerf}, as the (relatively) low fidelity of our NeRFs does not require the full anti-aliasing machinery of that work.

\section{ReGAL with Importance Sampling}
\label{app:is_regal}

The basic variant of ReGAL (\Cref{sec:diffusion_conditioning}, \Cref{alg:regal}) can be viewed as a one-sample importance sampler that samples from an extended-space target density
\begin{align}
    \gamma(z_{[0, 1]}, c_{[0, 1]}, x, c) := \underbrace{p_{\text{diff}}(z_1) \left(\prod_{i = T, \dotsc, 1} p_{\text{diff}}(z_{s(i)} | z_{t(i)}) \right) p_{\text{diff}}(x | z_0)}_{\text{Probability of denoising diffusion}} \underbrace{\left(\prod_{i = T, \dotsc, 0} p_{\text{aux}}(c_{t(i)} | z_{t(i)})\right) p_{\text{aux}}(c | x)}_{\text{Probability of auxiliary latents}} \underbrace{p(y | x, c)}_{\text{Likelihood}} \label{eq:regal_is_target}
\end{align}
where we discretize the unit time interval into $T$ bins with $s(i) = (i - 1) / T$ and $t(i) = i / T$, and define the intermediate diffusion latent variables as $z_{[0, 1]} := (z_{t(i)})_{i = 0}^T$.

To aid clarity, we add subscripts to the density terms to indicate whether they come from the diffusion model (``diff'') or the prior over the auxiliary latent (``aux'').
The densities that come from the diffusion model are (i) the initial diffusion noise $p_{\text{diff}}(z_1) = \mathcal N(z_1; 0, I)$, (ii) the reverse denoising step $p_{\text{diff}}(z_{s(i)} | z_{t(i)})$ and (iii) the decoder $p_{\text{diff}}(x | z_0)$.
The prior over the auxiliary latent $p_{\text{aux}}(c | z)$ is given.
However, we introduce copies of the auxiliary latent variable $c_{[0, 1]} := (c_{t(i)})_{i = 0}^T$ (corresponding to the intermediate diffusion variables $z_{[0, 1]}$) and place the auxiliary latent prior $p_{\text{aux}}(c_{t(i)} | z_{t(i)})$ over them.

Integrating out $z_{[0, 1]}, c_{[0, 1]}$ leaves us with the desired target unnormalized density $p(x)p(c | x)p(y | x, c)$.
Hence sampling from $\pi(z_{[0, 1]}, c_{[0, 1]}, x, c) \propto \gamma(z_{[0, 1]}, c_{[0, 1]}, x, c)$ and ignoring $z_{[0, 1]}, c_{[0, 1]}$ leaves us with a sample from the desired posterior $p(x, c | y) \propto p(x)p(c | x)p(y | x, c)$.

The last piece for defining an importance sampler is the proposal distribution which we set to be the ReGAL procedure in \Cref{alg:regal}, resulting in the following proposal density
\begin{align}
    q(z_{[0, 1]}, c_{[0, 1]}, x, c) := \underbrace{p_{\text{diff}}(z_1) p_{\text{aux}}(c_1 | z_1)}_{\text{Init. before the for loop}} \underbrace{\left(\prod_{i = T, \dotsc, 1} \hat p(z_{s(i)} | z_{t(i)}, y) \hat p(c_{s(i)} | z_{s(i)}, c_{t(i)}, y)\right)}_{\text{The for loop}} \underbrace{p_{\text{diff}}(x | z_0) \hat p(c | x, c_0, y)}_{\text{Sampling after the for loop}}. \label{eq:regal_is_proposal}
\end{align}

Given a budget of $K$ samples, importance sampling samples $K$ times from $z_{[0, 1]}^k, c_{[0, 1]}^k, x^k, c^k \sim q$ and assigns each sample a weight of
\begin{align}
    w_k = \frac{\gamma(z_{[0, 1]}^k, c_{[0, 1]}^k, x^k, c^k)}{q(z_{[0, 1]}^k, c_{[0, 1]}^k, x^k, c^k)}, \quad (k = 1, \dotsc, K). \label{eq:is_regal_weight}
\end{align}
The full algorithm can be seen in \Cref{alg:is_regal}.

The posterior $p(x, c | y)$ can be approximated as a weighed sum of delta masses $\sum_{k = 1}^K \frac{w_k}{\sum_{i = 1}^K w_i} \delta_{(x^k, c^k)}(x, c)$ and the posterior expectation of any test function $f$ can be estimated as $\E_{p(x, c | y)}[f(x, c)] \approx \sum_{k = 1}^K \frac{w_k}{\sum_{i = 1}^K w_i} f(x^k, c^k)$.
Note that while ReGAL (\Cref{alg:regal}) isn't guaranteed to converge to the desired posterior, this importance sampling version of it is---the posterior expectation estimate is consistent with $K$.

In order to get one unweighted approximate sample, we can resample according to the weights: $i \sim \mathrm{Cat}(w_{1:K}); (x, c) \leftarrow (x^i, c^i)$.
Alternatively, we can compute the posterior mean by setting the test function $f$ above to be the identity function.
In practice, due to severe weight degeneracy, we find both alternatives to produce nearly identical outputs.

\begin{algorithm}[!tb]
   \caption{ReGAL with Importance Sampling}
   \label{alg:is_regal}
\begin{algorithmic}[1]
   \STATE {\bfseries Input:} Diffusion prior denoiser $\epsilon_\phi(z_t; t)$, prior over the auxiliary latent $p(c | x)$, likelihood $p(y | x, c)$, Langevin step size $\delta$, number of discretization bins $T$ and a discretization schedule $s(i) = (i - 1) / T, t(i) = i / T$, number of importance samples $K$, observation $y$.
   \STATE {\bfseries Output:} A set of weighted particles $\{((x^k, c^k), w_k)\}_{k = 1}^K$ that approximates $p(x, c | y) \propto p(x)p(c | x)p(y | x, c)$.
   \FOR{$k = 1, \dotsc, K$}
   \STATE Sample $z_1^k \sim p(z_1) = \mathcal N(z_1; 0, 1)$ and $c_1^k \sim p(c | z_1)$.
   \FOR{$i = T, \dotsc, 1$}
   \STATE Sample $z_{s(i)}^k \sim \hat p(z_{s(i)} | z_{t(i)}^k, y)$ (see \eqref{eq:reconstruction_guidance_conditional}).
   \STATE Sample $c_{s(i)}^k \sim \hat p(c_{s(i)} | z_{s(i)}^k, c_{t(i)}, y; \delta)$ (see \eqref{eq:langevin_step}).
   \ENDFOR
   \STATE Sample $x^k \sim p(x | z_0^k), c^k \sim \hat p(c | x^k, c_0^k, y; \delta)$ (see \eqref{eq:langevin_step}).
   \STATE Evaluate weight $w_k$ according to \eqref{eq:is_regal_weight}.
   \ENDFOR
   \STATE {\bfseries Return:} $\{((x^k, c^k), w_k)\}_{k = 1}^K$.
\end{algorithmic}
\end{algorithm}

\section{ReGAL with Sequential Monte Carlo}
\label{app:smc_regal}

Given the sequential nature of ReGAL, it is natural to introduce place it within a sequential Monte Carlo (SMC) framework \citep{doucet2009tutorial,chopin2020introduction}. The resultant algorithm, dubbed ReGAL-SMC, generalizes the previous versions.

ReGAL samples variables in the following order
\begin{align*}
    (z_1, c_1), (z_{t(T - 1)}, c_{t(T - 1)}), \dotsc, (z_{t(i)}, c_{t(i)}), (z_{s(i)}, c_{s(i)}), \dotsc, (z_{t(1)}, c_{t(1)}), (z_0, c_0), (x, c) %
\end{align*}
where the full sequence has length $T + 2$.
To simplify notation for SMC, let's re-define this sequence as $((z^{(n)}, c^{(n)}))_{n = 1}^N$ where $N = T + 2$.
That is, $(z^{(n)}, c^{(n)}) := (z_{t(T - n + 1)}, c_{t(T - n + 1)})$ for $n = 1, \dotsc, N - 1$ and $(z^{(N)}, c^{(N)}) := (x, c)$.

\textbf{Final target distribution}
Using this notation, we can rewrite the target distribution on the full extended space $z^{(1:N)}, c^{(1:N)}$ from \eqref{eq:regal_is_target} as
\begin{align}
    \gamma_N(z^{(1:N)}, c^{(1:N)}) := p_{\text{diff}}(z^{(1)})p_{\text{aux}}(c^{(1)} | z^{(1)}) \left(\prod_{n = 2}^N p_{\text{diff}}(z^{(n)} | z^{(n - 1)}) p_{\text{aux}}(c^{(n)} | z^{(n)})\right) p(y | z^{(N)}, c^{(N)}) \label{eq:regal_smc_final_target}
\end{align}
where the density terms correspond to the terms in \eqref{eq:regal_is_target}.

Given a sequence of variables $z^{(1:N)}, c^{(1:N)}$, SMC sequentially builds approximations of target distributions $\gamma_n(z^{(1:n)}, c^{(1:n)})$ from $n = 1$ to $N$.
Since the final target distribution $\gamma_N$ in \eqref{eq:regal_smc_final_target} marginalizes to the desired posterior distribution $p(x, c | y)$, given an approximate sample $(z^{(1:N)}, c^{(1:N)})$ from $\gamma_N$, we can take $(x, c) := (z^{(N)}, c^{(N)})$ to be the approximate sample from $p(x, c | y)$.

\textbf{Intermediate target distributions}
While there is complete freedom in defining the target densities $\gamma_n(z^{(1:n)}, c^{(1:n)})$ when $n < N$, some choices lead to better posterior approximations.
A straightforward choice would be defining $\gamma_n$ exactly the same as \eqref{eq:regal_smc_final_target}, with the product going only until $n$ and without the likelihood term.
This choice is problematic since the likelihood term is incorporated only at the last step so the target distributions are equal to the prior for all but the last step.
Thus, SMC makes no progress in approximating the posterior for the first $N - 1$ steps, and is asked to do the full inference in one step.

A better sequence of targets would include the prior times the \emph{lookahead likelihood}, $p(y | z^{(n)}, c^{(n)}) := \int p(y | z^{(N)}, c^{(N)}) \prod_{i = n + 1}^N p(z^{(i)}, c^{(i)} | z^{(i - 1)}, c^{(i - 1)}) \,\mathrm dz^{(n + 1:N)} dc^{(n + 1:N)}$.
In this case, SMC targets the \emph{smoothing} posterior $\gamma_n^{\text{opt}}(z^{(1:n)}, c^{(1:n)}) := \int \gamma_N(z_{1:N}, c_{1:N}) \,\mathrm dz^{(n + 1:N)}$ which yields a much more gradual transition when $n$ goes from 1 to $N$.
Since the lookahead likelihood is unavailable, we approximate it as
\begin{align}
    \tilde p(y | z^{(n)}, c^{(n)}) := p(y | x = x_\phi(z_t = z^{(n)}; t = t(T - n + 1)), c = c^{(n)}) \approx p(y | z^{(n)}, c^{(n)})
\end{align}
like in \eqref{eq:regal_score}.
Thus, the actual intermediate target density (that approximates $\gamma_n^{\text{opt}}$) we use in ReGAL-SMC is
\begin{align}
    \gamma_n(z^{(1:n)}, c^{(1:n)}) := p_{\text{diff}}(z^{(1)})p_{\text{aux}}(c^{(1)} | z^{(1)}) \left(\prod_{i = 2}^n p_{\text{diff}}(z^{(i)} | z^{(i - 1)}) p_{\text{aux}}(c^{(i)} | z^{(i)})\right) \tilde p(y | z^{(n)}, c^{(n)}), \quad (n = 1, \dotsc, N - 1). \label{eq:regal_smc_intermediate_target}
\end{align}

\textbf{Proposals and weights} We can factorize the proposal distribution in \eqref{eq:regal_is_proposal} as a Markov chain of per-step proposals $q(z^{(1:N)}, c^{(1:N)}) = \prod_{n = 1}^N q_n(z^{(n)}, c^{(n)} | z^{(n - 1)}, c^{(n - 1)})$ where
\begin{align}
    q_n(z^{(n)}, c^{(n)} | z^{(n - 1)}, c^{(n - 1)}) =
    \begin{cases}
    p_{\text{diff}}(z^{(1)}) p_{\text{aux}}(c^{(1)} | z^{(1)}) & \text{ if } n = 1 \\
    \hat p(z^{(n)} | z^{(n - 1)}, y) \hat p(c^{(n)} | z^{(n)}, c^{(n - 1)}, y) & \text{ if } 1 < n < N\\
    p_{\text{diff}}(z^{(N)} | z^{(N - 1)}) \hat p(c^{(N)} | z^{(N)}, c^{(N - 1)}, y) & \text{ if } n = N.
    \end{cases} \label{eq:regal_smc_proposal}
\end{align}
\Cref{alg:smc_regal} shows the full algorithm, where the weights are computed as
\begin{align}
    w^{(1)}(z^{(1)}, c^{(1)}) &= \frac{\gamma_1(z^{(1)}, c^{(1)})}{q_n(z^{(1)}, c^{(1)})} \\
    &= \frac{p_{\text{diff}}(z^{(1)})p_{\text{aux}}(c^{(1)} | z^{(1)})\tilde p(y | z^{(1)}, c^{(1)})}{p_{\text{diff}}(z^{(1)}) p_{\text{aux}}(c^{(1)} | z^{(1)})} \\
    &= \tilde p(y | z^{(1)}, c^{(1)}), \label{eq:first_weight}\\
    w^{(n)}(z^{(1:n)}, c^{(1:n)}) &= \frac{\gamma_n(z^{(1:n)}, c^{(1:n)})}{\gamma_{n - 1}(z^{(1:n - 1)}, c^{(1:n - 1)}) q_n(z^{(n)}, c^{(n)} | z^{(n - 1)}, c^{(n - 1)})} \\
    &=
    \begin{cases}
        \frac{\tilde p(y | z^{(n)}, c^{(n)})p_{\text{diff}}(z^{(n)} | z^{(n - 1)}) p_{\text{aux}}(c^{(n)} | z^{(n)})}{\tilde p(y | z^{(n - 1)}, c^{(n - 1)})\hat p(z^{(n)} | z^{(n - 1)}, y) \hat p(c^{(n)} | z^{(n)}, c^{(n - 1)}, y)} & \text{if } 1 < n < N\\
        \frac{\tilde p(y | z^{(N)}, c^{(N)}) p_{\text{aux}}(c^{(N)} | z^{(N)})}{\tilde p(y | z^{(N - 1)}, c^{(N - 1)}) \hat p(c^{(N)} | z^{(N)}, c^{(N - 1)}, y)} & \text{if } n = N.
    \end{cases} \label{eq:non_first_weight}
\end{align}

\textbf{At optimality} Proposals in \eqref{eq:regal_smc_proposal} are meant to approximate the conditional posterior $q_n^{\text{opt}}(z^{(n)}, c^{(n)} | z^{(n - 1)}, c^{(n - 1)}, y) \propto \gamma_n^{\text{opt}}(z^{(1:n)}, c^{(1:n)}) / \gamma_{n - 1}^{\text{opt}}(z^{(1:n - 1)}, c^{(1:n - 1)})$.
If $q_n = q_n^{\text{opt}}$, sequentially sampling from $q_n$ yields an exact posterior sample.
Moreover, if $\gamma_n = \gamma_n^{\text{opt}}$, SMC's weights are constant at every step, thus avoiding any resampling.
In practice, our approximations to $q_n^{\text{opt}}$ and $\gamma_n^{\text{opt}}$ are not exact, requiring the weights and resampling in order to make SMC's posterior approximation consistent.

\begin{algorithm}[!tb]
   \caption{ReGAL with Sequential Monte Carlo}
   \label{alg:smc_regal}
\begin{algorithmic}[1]
   \STATE {\bfseries Input:} Diffusion prior denoiser $\epsilon_\phi(z_t; t)$, prior over the auxiliary latent $p(c | x)$, likelihood $p(y | x, c)$, Langevin step size $\delta$, number of discretization bins $T$ and a discretization schedule $s(i) = (i - 1) / T, t(i) = i / T$, number of importance samples $K$, observation $y$.
   \STATE {\bfseries Output:} A set of weighted particles $\{((x^k, c^k), w_k)\}_{k = 1}^K$ that approximates $p(x, c | y) \propto p(x)p(c | x)p(y | x, c)$.

   \STATE Propose $z^{(1), k}, c^{(1), k} \sim q_1(z^{(1)}, c^{(1)})$ ($k = 1, \dotsc, K$) (see \eqref{eq:regal_smc_proposal}).
   \STATE Evaluate weight $w^{(1), k} = w^{(1)}(z^{(1), k}, c^{(1), k})$ ($k = 1, \dotsc, K$) (see \eqref{eq:first_weight}).
   
   \FOR{$n = 2, \dotsc, N$}
   \STATE Sample ancestral indices $a^{(n - 1), 1:K}$ based on weights $w^{(n - 1), 1:K}$.
   \STATE Set $z^{(n - 1), k}, c^{(n - 1), k} \leftarrow z^{(n - 1), a^{(n - 1), k}}, c^{(n - 1), a^{(n - 1), k}}$ ($k = 1, \dotsc, K$).
   
   \STATE Propose $z^{(n), k}, c^{(n), k} \sim q_n(z^{(n)}, c^{(n)} | z^{(n - 1), k}, c^{(n - 1), k})$ ($k = 1, \dotsc, K$) (see \eqref{eq:regal_smc_proposal}).
   \STATE Evaluate weight $w^{(n), k} = w^{(n)}(z^{(1:n), k}, c^{(1:n), k})$ ($k = 1, \dotsc, K$) (see \eqref{eq:non_first_weight}).
   \ENDFOR
   \STATE {\bfseries Return:} $\{((x^{(N), k}, c^{(N), k}), w^{(N), k})\}_{k = 1}^K$.
\end{algorithmic}
\end{algorithm}

\section{Scene Prior Architecture Details}
\label{app:priors}

When computing the likelihood, we subsample 1024 all the rays across all views of a scene, with the number primarily motivated by its effect on inference speed.

\textbf{ProbNeRF} We use the same architecture as \citet{hoffman2023probnerf}.

\textbf{Triplane SSDNeRF} The triplane NeRF we use has 3 planes with shape $[128, 128, 6]$.
Samples from those planes are trilinearly interpolated, concatenated and passed through one hidden layer of size 64 for the density output, and another layer of the same size for the color output.
We use the ReLU activation the hidden layers.
For the color output we additionally concatenate positional encodings for the ray direction.
We use the sigmoid activation for color, and softplus for density.
We additionally bias the pre-activation density input by $-3$, so improve the gradient flow to all parts of the NeRF.
We deliberately use relatively lightweight parameterization for the NeRF, so as to force the prior to do model the distribution as adjacent to the observations as possible.

For the denoiser, we use a UNet with the following levels: $[128, 128, 128] \rightarrow [64, 64, 256] \rightarrow [32, 32, 256] \rightarrow [16, 16, 512] \rightarrow [8, 8, 512]$.
The final 3 levels also get a spatial attention module with 4 heads.

\textbf{Set-Latent SSDNeRF} For the set-latent NeRF, we use a similar architecture as \citet{sajjadi2022scene}. We use 512 latents, each with 64 dimensions. The transformer uses 8 heads, with the QKV dimension set to 256 and MLP dimension to 256.
After the transformer, we use the same stack of layers and conditioning signals as with the TriPlane NeRF described above. For MSN we set the radius of the NeRF to 15, outside of which we set the density to 0.

For the denoiser, we use a transformer with 12 layers and 16 heads, and QKV dimension set to 1024 and MLP dimension to 2048. The inputs are first projected to a dimension of 512.

\section{Scene Prior Training Details}
\label{app:training}

\textbf{ProbNeRF} We train for $2\times10^6$ steps.
We use the Adam \citep{kingma2017adam} optimizer with a learning rate schedule where we warm up the learning rate from 0 to $10^{-4}$ over 50 steps, and then step-wise halve it every 50000 steps afterward.
We used a minibatch of 8 scenes. For the guide, we use 10 random views per scene.

\textbf{SSDNeRF} We train for $5 \times 10^5$ steps.
For ShapeNet, we use the Adam optimizer with a learning rate schedule where we warm up the learning rate from 0 to $10^{-3}$ and then step-wise halve it every 125000 steps afterward.
For MSN, we use the cosine learning rate schedule \citep{loshchilov2017sgdr}, where we warm up the learning rate from 0 to $10^{-3}$ and then decay it to 0 over the training duration.
We use a minibatch of 16 scenes.
For MSN, we found it necessary to clip the global gradient norm to 1.

For the scene latents, they're initialized at zeros for ShapeNet and from an isotropic Gaussian with a scale of 0.1.
It is important to initialize the latents away from zero when using a transformer-based denoiser.
We use L2 regularization for the latents, with a factor of $10$ for ShapeNet and $30$ for MSN.
We anneal the number of latent optimization substeps from 15 to 4 at iteration 10000, and, for MSN, to 1 at iteration 100000.
Generally, using fewer substeps lets the diffusion prior have an easier job fitting the distribution of the learned scene latents.

\section{Scene Prior Evaluation Details}
\label{app:evaluation}

\textbf{ShapeNet Masks} Kubric provides ground truth segmentation masks for rendered objects.
For ShapeNet we found it necessary to also clip the regions which had depths greater than 1000.

\textbf{Corruption Parameterization} For the NeRF corruption, we use the same NeRF as ProbNeRF.
For the FOV parameterization we perform inference over $\mathbb{R}$, but constrain it to lie in $[\pi/4, 3\pi/4]$ via the sigmoid transformation.

\textbf{ProbNeRF MAP} We use the Adam optimizer with learning rate of $10^{-2}$ for $3000$ iterations.

\textbf{ProbNeRF VI} We use the Adam optimizer with learning rate of $10^{-4}$ for $10000$ iterations.
A lower learning rate is necessary for stability.
We anneal the KL divergence term \citep{kingma2019introduction} with a schedule that linearly increases from 0 to 1 over a period of $5000$ iterations.

\textbf{Diffusion Conditioning} We use a schedule to anneal the likelihood $f(t) = 0.01 + 0.99 (1 - t)^{1.5}$.
Additionally, for Rain and Cloud corruptions for MSN, we scale the likelihood when computing $\hat p(z_s | z_t, y)$ \Cref{eq:reconstruction_guidance_conditional} by a factor of 3.
This is to encourage the scene parameters to explain the scene rather than the corruption NeRF.
We expect this to be less necessary with a stronger prior, and indeed we did not need to do this for ShapeNet.

For ShapeNet, we use $T = 500$ for clean and rain conditions, $T = 5000$ for clouds and $T = 1500$ for FOV.
For MSN, we use $T = 2000$ steps in all cases except FOV, where we use $T = 5000$. In general, harder corruptions require more steps, but due to computational requirements we preferred to use as few steps as feasible.

For ShapeNet, we use $\delta=10^{-5}$ for rain and cloud conditions, $\delta=10^{-6}$ for FOV.
For MSN, we use $\delta=3\times10^{-5}$ for rain and cloud conditions, $\delta=10^{-5}$ for FOV.
Up to a limit, using a larger step size encourages mixing of the SMC rejuvenation kernel.

\section{Prior Samples}
\label{app:prior_samples}

In \Cref{fig:probnerf_shapenet_prior_samples,fig:shapenet_prior_samples,fig:msn_prior_samples}, we show unconditional samples from the scene priors used in the experiments.
The ShapeNet car samples contain much finer details under the SSDNeRF model.
While MultiShapeNet samples are not as high quality, the underlying scene prior still proves to empirically outperform baselines in the robust inverse graphics tasks in the experiments.

\begin{figure*}[!ht]
  \centering
    \includegraphics[width=\textwidth]{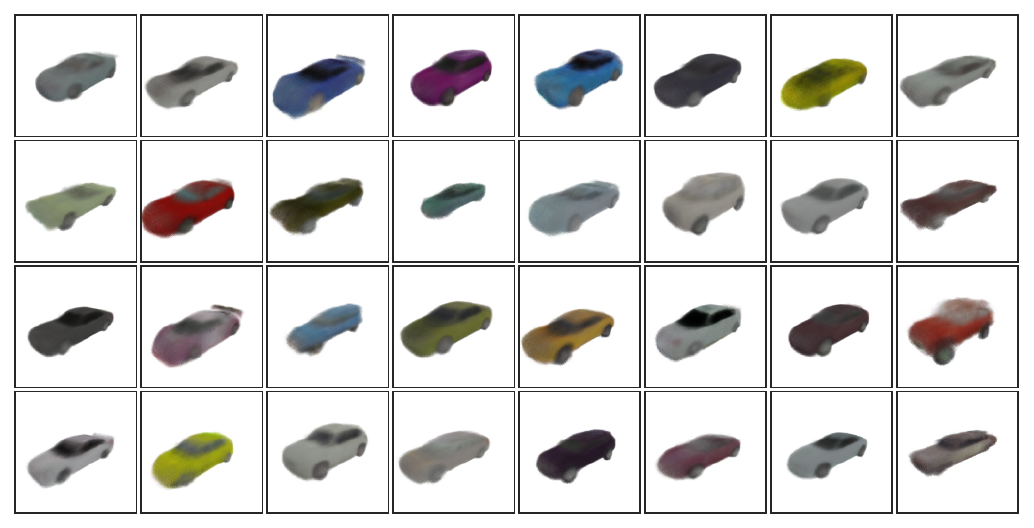}
\caption{
    ShapeNet cars prior samples from the ProbNeRF model.
}
    \label{fig:probnerf_shapenet_prior_samples}
\end{figure*}
\begin{figure*}[!ht]
  \centering
    \includegraphics[width=\textwidth]{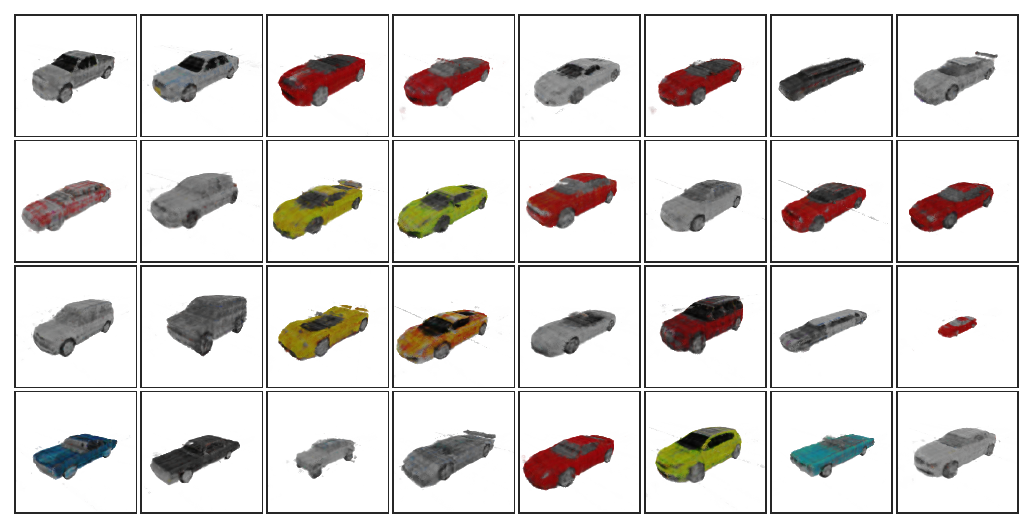}
\caption{
    ShapeNet cars prior samples from the SSDNeRF model.
}
    \label{fig:shapenet_prior_samples}
\end{figure*}
\begin{figure*}[!ht]
  \centering
    \includegraphics[width=\textwidth]{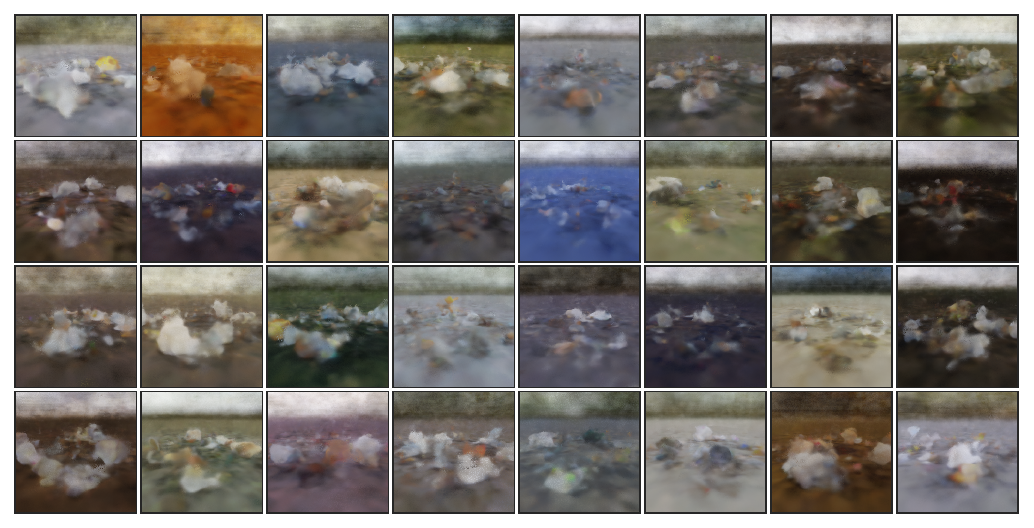}
\caption{
    MultiShapeNet prior samples from the SSDNeRF model.
}
    \label{fig:msn_prior_samples}
\end{figure*}

\section{Additional Reconstructions}
\label{app:extra_recons}

See \Cref{fig:shapenet_recon_rain,fig:shapenet_recon_clouds,fig:msn_recon_rain,fig:msn_recon_clouds}.

\begin{figure*}[!ht]
  \centering
    \includegraphics[width=0.8\textwidth]{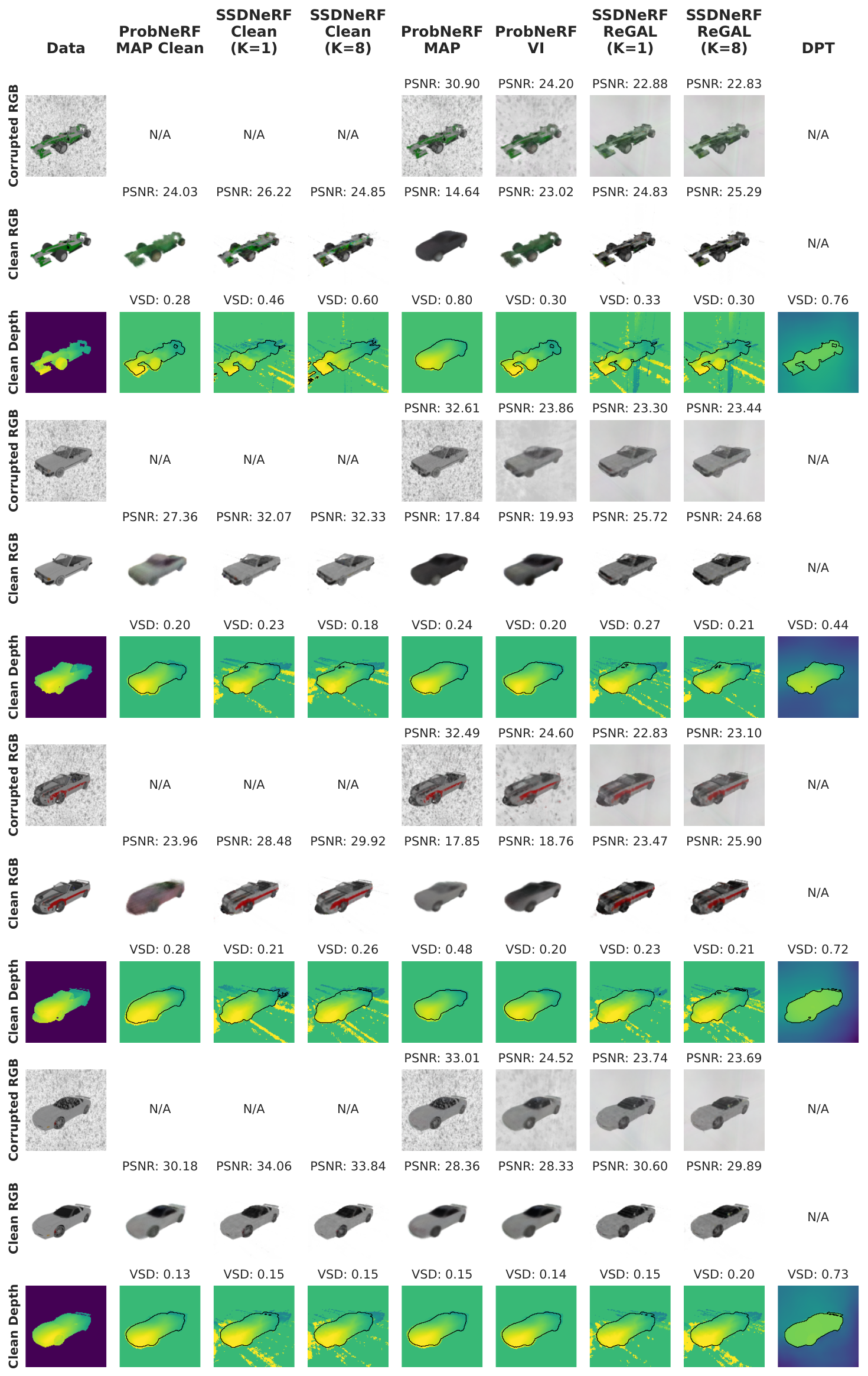}
\caption{
    Reconstructions for the rain corruption of the ShapeNet cars dataset. See \cref{fig:recons} for details.
}
\label{fig:shapenet_recon_rain}
\end{figure*}

\begin{figure*}[!ht]
  \centering
    \includegraphics[width=0.8\textwidth]{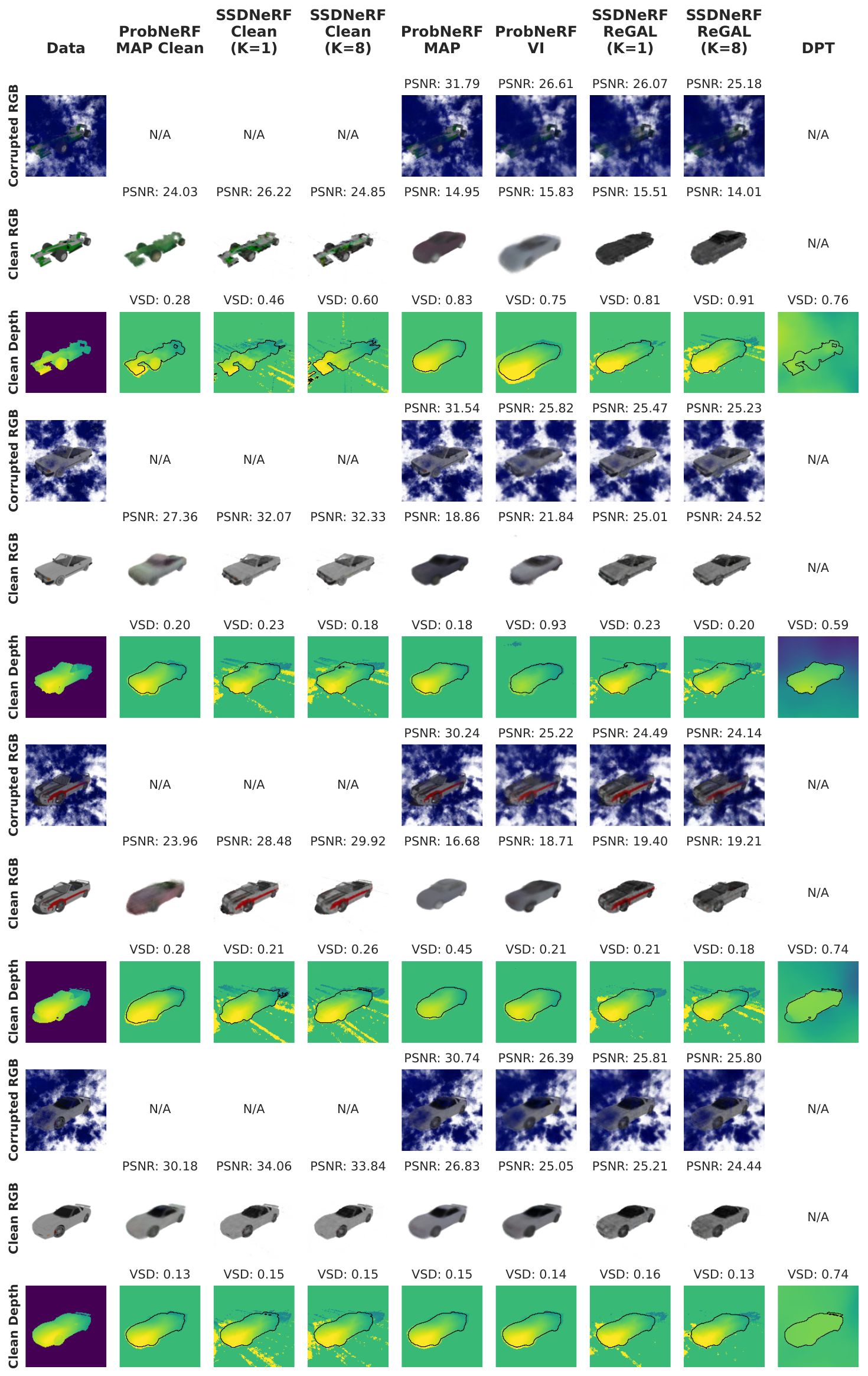}
\caption{
    Reconstructions for the clouds corruption of the ShapeNet cars dataset. See \cref{fig:recons} for details.
}
\label{fig:shapenet_recon_clouds}
\end{figure*}

\begin{figure*}[!ht]
  \centering
    \includegraphics[width=0.52\textwidth]{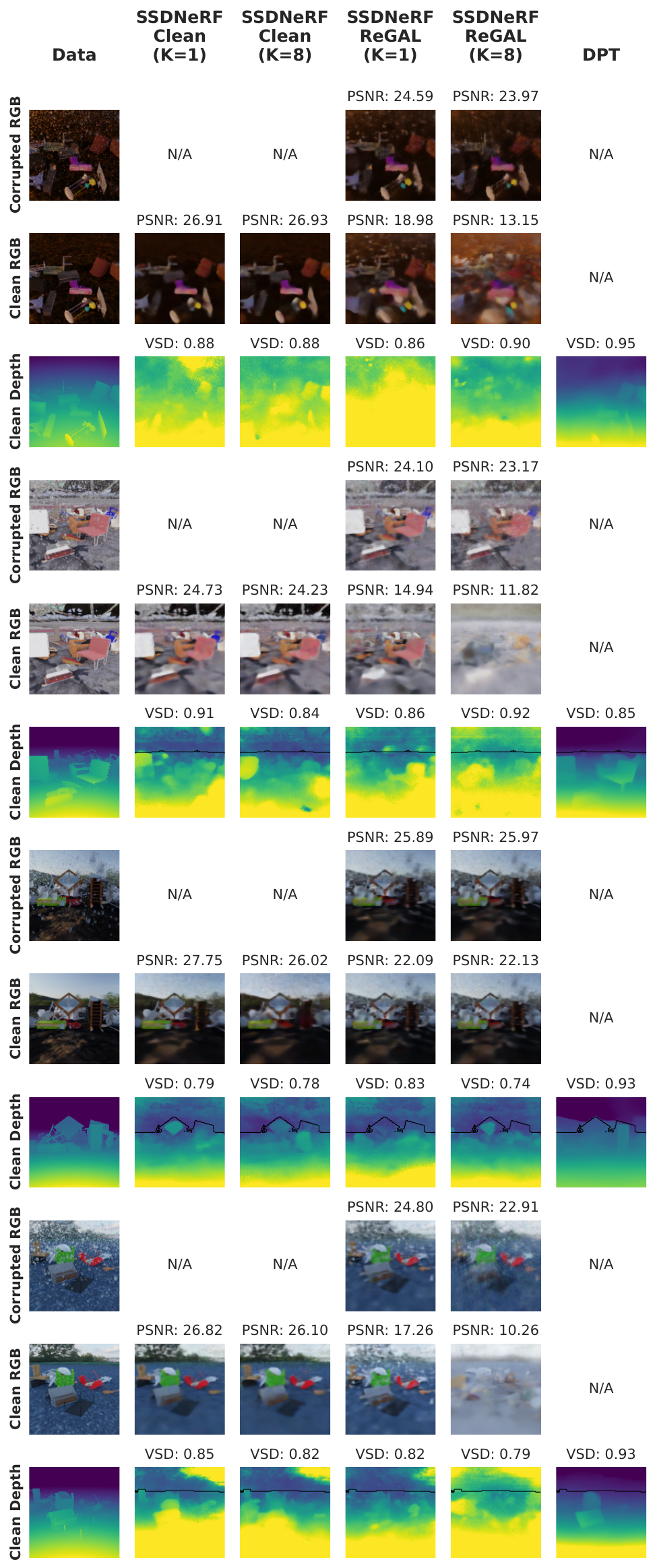}
\caption{
    Reconstructions for the rain corruption of the MultiShapeNet dataset. See \cref{fig:recons} for details.
}
\label{fig:msn_recon_rain}
\end{figure*}

\begin{figure*}[!ht]
  \centering
    \includegraphics[width=0.52\textwidth]{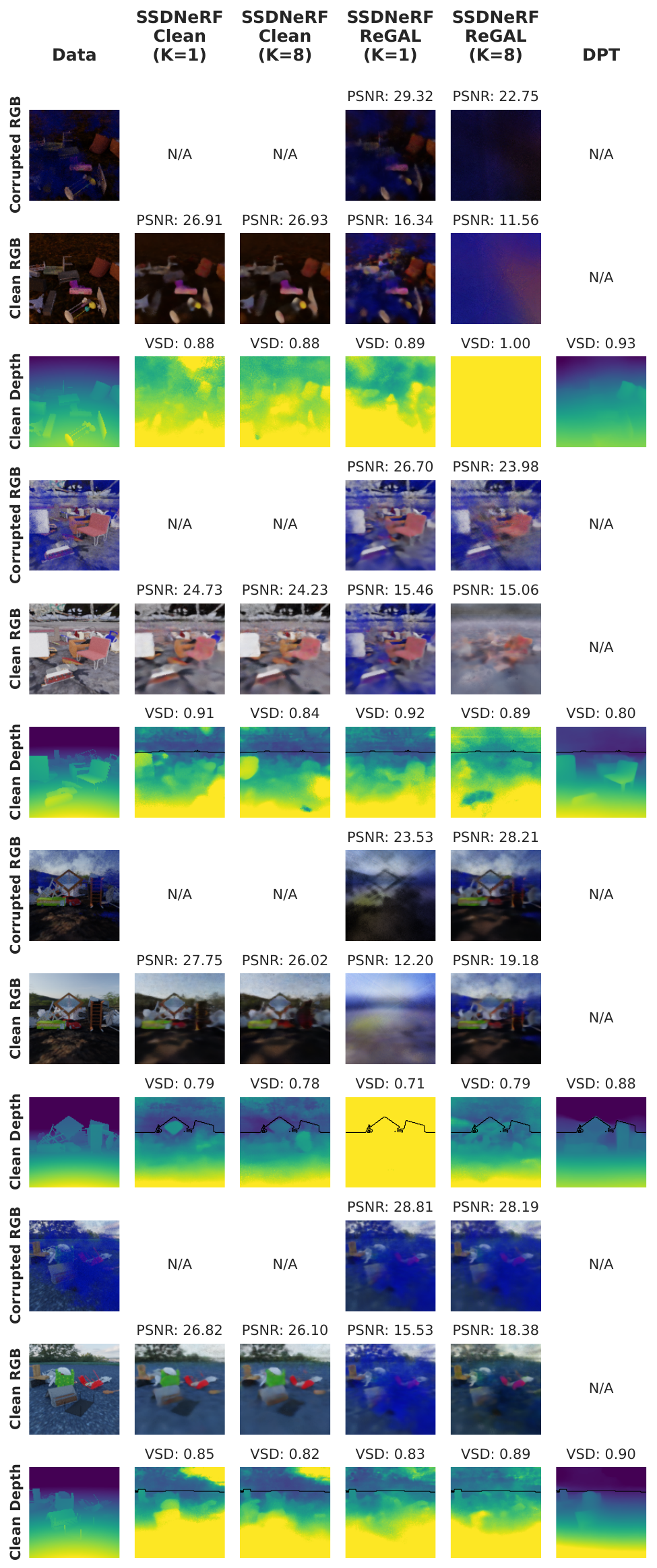}
\caption{
    Reconstructions for the clouds corruption of the MultiShapeNet. See \cref{fig:recons} for details.
}
\label{fig:msn_recon_clouds}
\end{figure*}

\section{Posterior Uncertainty}
\label{app:posterior_uncertainty}

In \Cref{fig:shapenet_recon_clouds_samples} we show multiple samples from the posterior for the clouds corruption of the ShapeNet cars dataset for SSDNeRF ReGAL ($K=8$).

\begin{figure*}[!ht]
  \centering
    \includegraphics[width=0.52\textwidth]{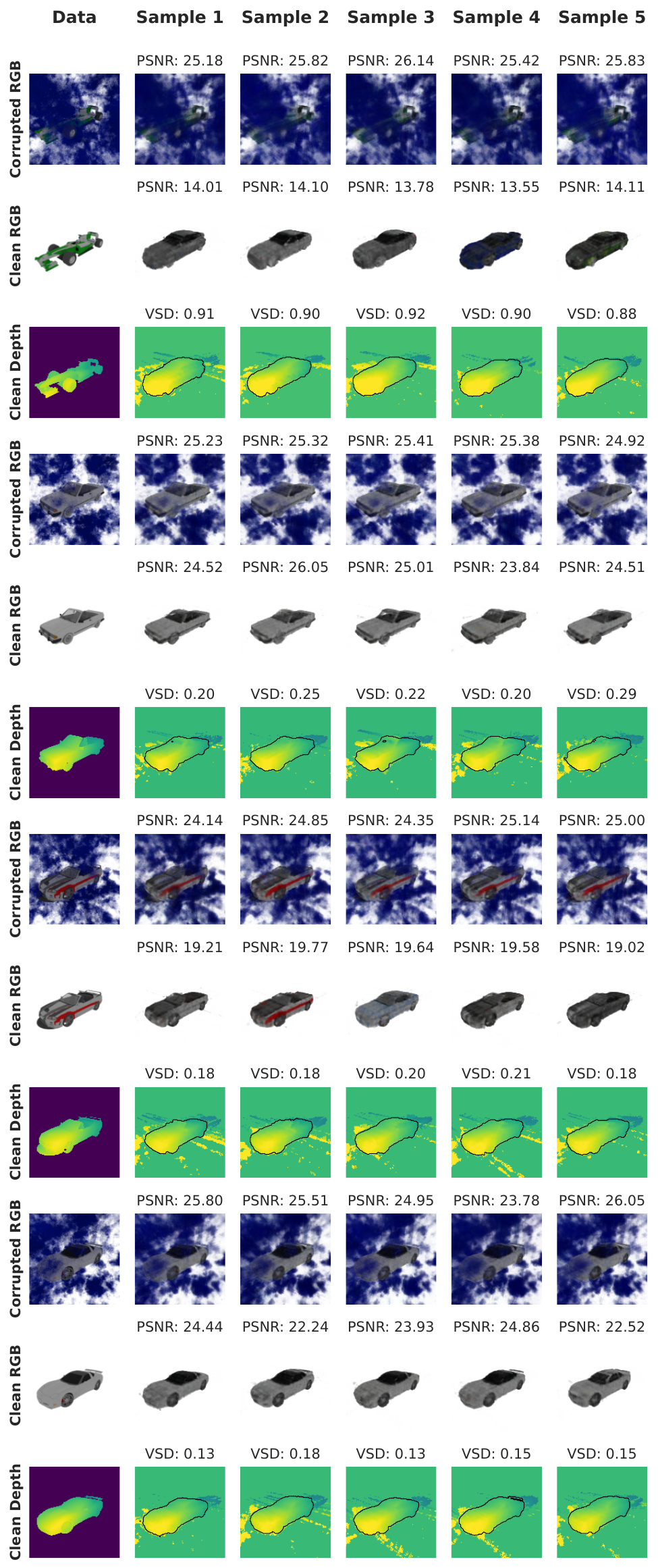}
\caption{
    Multiple samples from the posterior for the clouds corruption of the ShapeNet cars dataset for SSDNeRF ReGAL (K=8). See \cref{fig:recons} for details.
}
\label{fig:shapenet_recon_clouds_samples}
\end{figure*}

\section{Performance}
\label{app:performance}

On a single A100 GPU, for each image it takes 9.5 minutes for MultiShapeNet and 7.5 minutes for ShapeNet to run ReGAL for 2000 steps to generate 8 particles. The time is proportional to the number of steps, so at a cost of quality, it could be reduced (as much as 10x before complete breakdown). The particles can be simulated on separate GPUs as well with relatively small cross-GPU communication requirements (due to IS/SMC resampling). Ultimately, however, practical deployment of this method would require more advanced diffusion sampling methods (such as Consistency Models \citep{song2023consistency} and Flow Matching \citep{lipman2022flow}) which would drastically reduce the number of steps we need to take. It is likely these are not drop-in and would require adjustment of the ReGAL algorithm.

\end{document}